
\documentclass{article}

\usepackage{microtype}
\usepackage[USenglish]{babel}
\usepackage{graphicx}
\usepackage{subfigure}
\usepackage{booktabs} 
\usepackage{amsmath}
\usepackage{amssymb}
\usepackage{amsfonts}
\usepackage{bm} 
\usepackage{dsfont} 

\usepackage{hyperref}


\usepackage{xcolor} 

\usepackage[accepted]{icml2020}

\usepackage{xspace}

\newcommand{\ie}{i.e.,\ }
\newcommand{\eg}{e.g.,\ }

\newcommand{\iid}{i.i.d.\xspace}
\newcommand{\cf}{cf.\xspace}

\usepackage{amsthm}

\newtheorem{maththm}{Theorem}
\newtheorem{mathlem}{Lemma}
\newtheorem{mathcor}{Corollary}

\newcommand{\eu}{\mathrm{e}}
\newcommand{\iu}{\mathrm{i}}
\newcommand{\upd}{\mathrm{d}}

\newif\ifdraft
\usepackage{color}

\draftfalse 
\ifdraft
\usepackage{cancel}
\fi

\newcommand{\weights}{w}
\newcommand{\dropoutVar}{z}

\newcommand{\nonlinear}{\phi}
\newcommand{\hiddenDim}{h}
\newcommand{\IR}{\mathbb{R}}
\newcommand{\prob}{\text{P}}
\newcommand{\expectation}[1]{\textbf{E}_Z\left\{#1\right\}}
\newcommand{\expectationWithoutVar}[1]{\textbf{E}\left\{#1\right\}}
\newcommand{\expectationW}[1]{\textbf{E}_W \left\{#1\right\}}

\newcommand{\eq}[1]{Eq.~(\ref{#1})}


\icmltitlerunning{Monte Carlo Dropout in Wide Neural Networks}

\begin{document}

\twocolumn[
\icmltitle{Characteristics of Monte Carlo Dropout in Wide Neural Networks}



\icmlsetsymbol{equal}{*}

\begin{icmlauthorlist}
\icmlauthor{Joachim Sicking}{equal,iais}
\icmlauthor{Maram Akila}{equal,iais}
\icmlauthor{Tim Wirtz}{equal,iais,fzml}
\icmlauthor{Sebastian Houben}{iais}
\icmlauthor{Asja Fischer}{bochum}
\end{icmlauthorlist}

\icmlaffiliation{iais}{Fraunhofer Institute for Intelligent Analysis and Information Systems IAIS, Sankt Augustin, Germany}
\icmlaffiliation{bochum}{Faculty of Mathematics, Ruhr-University Bochum, Bochum, Germany}
\icmlaffiliation{fzml}{Fraunhofer Center for Machine Learning}
\icmlcorrespondingauthor{J. Sicking}{joachim.sicking@iais.fraunhofer.de}

\icmlkeywords{Bayesian inference, Monte Carlo dropout, wide neural networks, central limit theorem}

\vskip 0.3in
]



\printAffiliationsAndNotice{\icmlEqualContribution} 

\begin{abstract}

Monte Carlo (MC) dropout is one of the state-of-the-art approaches for uncertainty estimation in neural networks (NNs). It has been interpreted as approximately performing Bayesian inference. Based on previous work on the approximation of Gaussian processes by wide and deep neural networks with random weights, we study the limiting distribution of wide untrained NNs under dropout more rigorously and prove that they as well converge to Gaussian processes for fixed sets of weights and biases. We sketch an argument that this property might also hold for infinitely wide feed-forward networks that are trained with (full-batch) gradient descent. The theory is contrasted by an empirical analysis in which we find correlations and non-Gaussian behaviour for the pre-activations of finite width NNs. We therefore investigate how (strongly) correlated pre-activations can induce non-Gaussian behavior in NNs with strongly correlated weights.

\end{abstract}

\section{Introduction}\label{sec:introduction}

Despite the huge success of deep neural networks (NNs),  robustly quantifying their prediction uncertainty is still an open problem. One of the most popular methods for uncertainty prediction is Monte Carlo (MC) dropout \cite{gal2016dropout}.
MC dropout has been proven practically successful in many applications, such as different regression tasks \cite{kendall2017uncertainties}, natural language processing~\cite{press2016using}, and object detection~\cite{miller2018dropout}. 
However, 
providing only a rough approximation of Bayesian inference,
MC dropout also carries theoretical and practical drawbacks. 
For example, as \citet{gal2017concrete} 
point out, the dropout rate has to be carefully tuned 
in order to correctly estimate the uncertainty at hand 
and
\citet{osband2016risk} showed that it does not converge to concentrated distributions in the infinite data limit. 

We therefore revisit the characteristics of the output distribution of NNs under dropout in this paper. To do so, we 
(i) present a rigorous theoretical analysis building on 
previous work for
wide feed-forward NNs. This work has established that an \iid~prior over their weights leads to approximating Gaussian processes, 
\ie drawing one set of weights yields a NN prediction 
that approximates the realization of one drawn function from a corresponding Gaussian process  initially been shown for NNs with only one hidden layer~\cite{Neal1994} 
and was recently generalized to deep NNs~\cite{lee2018deep, Matthews2018}.
%
We adapt the proof strategy of the latter work to show that performing dropout during inference in wide NNs with fixed weights also leads to an approximation of Gaussian processes (Sec.~\ref{sec:random_nn}).
(ii) Since the applicability of the central limit theorem to the pre-activations (i.e.~the inputs) of the neurons plays a central role in our proof, we
 investigate the pre-activations 
in randomly initialized and trained NNs under dropout and find
that the former are approximately normal distributed while the latter may follow non-Gaussian distributions (Sec.~\ref{sec:empirical}). (iii) We then demonstrate that such non-Gaussian distributions can be
induced by correlations of the pre-activations in the previous layer, by showing that these  lead to
distributions with exponential tails in a toy model and NNs with strongly correlated weights (Sec.~\ref{sec:strongly_correlated}).

\section{Theoretical Analysis of Random Neural Networks}\label{sec:random_nn}

As a first step towards a characterization of the limiting process of NNs under MC dropout at inference time we rigorously study a random NN. We consider a feed-forward NN  with $k+2$ layers which, for an input vector $x\in \IR^d$, is parameterized as follows: 
\begin{align}
 f^{(\nu)}_i(x) &= \sum_{j=1}^{h_{\nu-1}(n)} \frac{\weights_{ij}^{(\nu)} g^{(\nu-1)}_j(x)}{\sqrt{h_{\nu-1}(n)}} + b_i^{(\nu)}~,  \label{eq:def_f_mu_j_paper}
\end{align}
where $g^{(\nu)}_i(x)=\dropoutVar^{(\nu)}_{i}\nonlinear(f^{(\nu)}_i(x))$ for $\nu=1,\dots,k$, $g^{(0)}_i(x)\allowbreak = \allowbreak x_i$. $\phi$ is the activation function, $z_i$ are the dropout Bernoul\-li random variables (RVs) with keep rate $q=1-p$, and $h_{\nu}(n)$ is the neuron count in layer $\nu$.\footnote{A more detailed overview of the  structure and notation of this feed-forward NN is summarized in Appendix~\ref{app:clt_rnn}.} 

In previous work, see 
\citet{Matthews2018,lee2018deep,wu2018deterministic,tsuchida2019richer},
the authors considered NNs with independent prior distributions on the NN parameters. The resulting summands in Eq.~\eqref{eq:def_f_mu_j_paper} are independent RVs such that a central limit theorem can readily be applied. In contrast we examine a NN with a \emph{fixed set} of parameters and dropout acting as independent prior distribution on the activations. As a consequence, the resulting summands in Eq.~\eqref{eq:def_f_mu_j_paper}  are independent RVs only in the second layer ($\nu=2$) 
and are dependent RVs for all later ones. This follows from an inspection of the covariance of the pre-activations yielding
\begin{align}
\begin{split}
&\text{Cov}\left(f^{(\nu)}_i,f^{(\nu)}_j\right)= 
\frac{q}{h_{\nu-1}}\sum_{k,l=1}^{h_{\nu-1}(n)} \weights_{il}^{(\nu)}\weights_{jk}^{(\nu)}\\
&\times \Big( \left( \delta_{kl} + q\ (1-\delta_{kl})\right) \expectation{ \nonlinear^{(\nu-1)}_k \nonlinear^{(\nu-1)}_l} \\
& - q\,\expectation{ \nonlinear^{(\nu-1)}_k}\expectation{\nonlinear^{(\nu-1)}_l} \Big) \enspace,
\end{split}\label{def_variance_fi_fj}
\end{align}
where we dropped the arguments of $f^{(\nu)}_i$ and $\nonlinear^{(\nu-1)}_l$ for simplicity of notation. For finite $h_{\nu-1}(n)$ and $i\neq j$ the expression above is in general non-zero. Thus, 
we cannot apply standard central limit theorems.
Nonetheless, we are able to show that a random NN under MC dropout converges to a Gaussian process in the limit of infinite width. Our analysis facilitates the property of weights and biases being realizations of independent Gaussian distributed RVs in order to bound their behavior if the width of the NN approaches infinity. We summarize our findings and main result of this section in the following theorem.
\begin{maththm}\label{infinite_nn_with_dropout_are_Gaussian_processes}
Let $f^{(k)}(x)\in\IR^{L}$ be the pre-activation vector of the $k+2$-th layer of a NN, as defined by Eq.~\eqref{eq:def_f_mu_j_paper}
, where weights and biases follow a Gaussian distribution with zero mean and unit variance. For the widths of the NN layers going simultaneously to infinity, for each bounded $x\in\IR^d$ with $||x||_2^2\leq \alpha\,d$, and for a fixed set of weight and bias variables, the pre-activations of all layers $\nu=2,\dots,k+1$ converge in distribution over MC dropout to an uncorrelated Gaussian process.
\end{maththm}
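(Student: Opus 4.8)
The plan is to prove the theorem by induction over the layers $\nu=2,\dots,k+1$, showing at each step that the finite-dimensional marginals of the pre-activations — the joint law of $\big(f^{(\nu)}_i(x^{(a)})\big)$ over any finite set of neuron indices $i$ and inputs $x^{(a)}$ — converge over the dropout randomness to a multivariate Gaussian whose covariance is block-diagonal in the neuron index (the \emph{uncorrelated} claim). Throughout, the weights and biases are held fixed at a realization of the zero-mean unit-variance Gaussian, and the Gaussianity is used only to obtain almost-sure concentration of weight-dependent sums as the widths grow. By the Cram\'er--Wold device it suffices to prove that every fixed linear combination $\sum_{i,a}\theta_{ia}f^{(\nu)}_i(x^{(a)})$ converges to a univariate Gaussian, which reduces the vector-valued statement to a scalar triangular-array CLT.

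For the base case $\nu=2$ I would observe that $f^{(1)}_j(x)$ is deterministic, since the input $x$, the weights $\weights^{(1)}$ and the biases are all fixed; hence in \eqref{eq:def_f_mu_j_paper} the summands $\weights^{(2)}_{ij}\dropoutVar^{(1)}_j\nonlinear(f^{(1)}_j(x))$ are genuinely \emph{independent} across $j$, the only randomness being the fresh Bernoulli dropout $\dropoutVar^{(1)}_j$. A Lindeberg--Feller CLT then applies directly: the normalization $1/\sqrt{h_1}$ renders each term negligible, and the constraint $\|x\|_2^2\le\alpha\,d$ together with boundedness (or controlled growth) of $\nonlinear$ keeps the second moments uniformly bounded so that the Lindeberg condition holds. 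The limiting covariance is read off from \eqref{def_variance_fi_fj}, whose off-diagonal ($i\neq j$) entries vanish in the limit because the sum $\tfrac1{h_1}\sum_{k,l}\weights^{(2)}_{il}\weights^{(2)}_{jk}(\cdots)$ involves products of independent zero-mean weights attached to distinct output neurons and therefore tends to zero almost surely over the weight draw by the law of large numbers.

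The inductive step is the heart of the argument, because here the summands are dependent. Assuming the layer-$(\nu-1)$ pre-activations are asymptotically jointly Gaussian and uncorrelated across neurons, the activations $g^{(\nu-1)}_j=\dropoutVar^{(\nu-1)}_j\nonlinear(f^{(\nu-1)}_j)$ inherit correlations across $j$, so \eqref{eq:def_f_mu_j_paper} is \emph{not} a sum of independent terms and no standard CLT applies. The plan is to exploit that (i) the fresh dropout $\dropoutVar^{(\nu-1)}_j$ is independent across $j$, and (ii) uncorrelated-jointly-Gaussian implies asymptotic \emph{independence} of the $f^{(\nu-1)}_j$ across $j$, so the dependence among the summands vanishes in the wide limit. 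Concretely I would condition on the $\sigma$-algebra generated by the dropout of layers $1,\dots,\nu-2$ (which freezes the $f^{(\nu-1)}_j$), obtain conditional Gaussianity of the fresh-dropout fluctuation by a conditional Lindeberg CLT, and then feed this into a CLT for weakly dependent / exchangeable triangular arrays in the spirit of \citet{Matthews2018}. The decisive analytic input is that the empirical second moments $\tfrac1{h_{\nu-1}}\sum_j \weights^{(\nu)}_{ij}\weights^{(\nu)}_{i'j}\,\nonlinear(f^{(\nu-1)}_j)^2$ and the corresponding cross terms concentrate to the deterministic limits predicted by \eqref{def_variance_fi_fj} — with the inductive Gaussianity fixing the limits of $\expectation{\nonlinear^{(\nu-1)}_k\nonlinear^{(\nu-1)}_l}$ and the independence of distinct output weights again suppressing the $i\neq i'$ terms to yield the block-diagonal covariance, closing the induction.

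The main obstacle will be precisely this concentration of the empirical covariance in the presence of dependence: the relevant averages run over the $\nonlinear(f^{(\nu-1)}_j)$, which are \emph{correlated} across $j$ by exactly the mechanism quantified in \eqref{def_variance_fi_fj}, so a naive law of large numbers does not apply. Making the weakly-dependent CLT rigorous requires showing that the off-diagonal covariances decay fast enough as the widths grow simultaneously — the same smallness that produces the uncorrelated limit must be propagated and bounded \emph{quantitatively} through the recursion, and this is where the assumption of Gaussian (hence well-concentrated) weights and the simultaneous-width limit are genuinely needed. Controlling this residual dependence, rather than the Cram\'er--Wold reduction or the conditional CLT themselves, is the delicate part of the proof.
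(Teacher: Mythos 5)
Your plan reproduces the paper's overall skeleton: the Cram\'er--Wold reduction, an induction over layers whose base case $\nu=2$ has genuinely independent summands (no dropout on the input layer), off-diagonal covariances that vanish almost surely over the Gaussian weight draw because columns of weights attached to distinct output neurons are independent and zero mean (this is lemma~\ref{lemma:Limiting_Covariance_Structure} in the paper, proved by Chebyshev plus a union bound), and the observation that Gaussianity of the weights is needed only for self-averaging. Where you diverge is the mechanism of the inductive step, and that is exactly where your proposal has a gap.

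The conditioning trick buys less than you suggest. Writing $f^{(\nu)}_i-\expectation{f^{(\nu)}_i}=A_i+B_i$ with $A_i=h^{-1/2}\sum_j \weights^{(\nu)}_{ij}\,(\dropoutVar^{(\nu-1)}_j-q)\,\nonlinear(f^{(\nu-1)}_j)$ and $B_i=q\,h^{-1/2}\sum_j \weights^{(\nu)}_{ij}\,\bigl(\nonlinear(f^{(\nu-1)}_j)-\expectation{\nonlinear(f^{(\nu-1)}_j)}\bigr)$, your conditional Lindeberg CLT handles only $A_i$; the fluctuation of the conditional mean, $B_i$, is still a normalized sum of $h\to\infty$ terms that are dependent through the shared earlier-layer dropout, i.e.\ a problem of the same type and size as the one you started from. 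For this part you invoke a ``CLT for weakly dependent / exchangeable triangular arrays in the spirit of \citet{Matthews2018}'', but that tool is not available here: the exchangeability in \citet{Matthews2018} comes from the \iid prior over the weights, and it is destroyed precisely by fixing the weights, which is the whole point of the theorem. Moreover, your inductive hypothesis (convergence in distribution of the $f^{(\nu-1)}_j$ to independent limits) carries no rates, so it cannot by itself control a sum over $h$ dependent terms. The paper fills this hole with its own decoupling result, lemma~\ref{lemma:asymptotic_independence_gamma}: it bounds the gap between the characteristic function of the dependent sum, $\expectation{\prod_j e^{\iu\lambda\gamma^{(\nu)}_{j,n}/s_n}}$, and the product of the marginal characteristic functions, $\prod_j\expectation{e^{\iu\lambda\gamma^{(\nu)}_{j,n}/s_n}}$, by a Lindeberg-type term (controlled via the Lyapunov condition, lemma~\ref{proof_satisfaction_of_lypunov_condition}) plus cross-moment terms, and it kills the cross-moment terms \emph{with high probability over the weight realization} using Chebyshev and the independence of distinct weight columns --- the self-averaging you correctly identify but do not operationalize. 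Only after this factorization does a standard Lyapunov CLT applied to the marginalized, independent array (lemma~\ref{lemma:limit_gaus_margin_psi}) finish the argument. So your proposal is repairable, but the repair --- proving the factorization lemma or an equivalent quantitative covariance-decay statement for $B_i$ --- is the main technical content of the paper, not a step that can be outsourced to an existing theorem.
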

The theorem is a direct consequence of corollary~\ref{corollary:activations_are_Gaussian_Processes} and is 
proven in Appendix~\ref{app:clt_rnn}. The following observation regarding  the mutual correlations between $f_i^{(\nu)}(x)$ and $f_j^{(\nu)}(x)$ is central to the proof: since weights and biases are \iid Gaussian RVs, the correlations between the pre-activations are rather weak and vanish for $n\to\infty$ whenever $i\neq j$.
Extending  theorem~\ref{infinite_nn_with_dropout_are_Gaussian_processes} to trained NNs can be achieved using that for (full-batch) gradient descent and rather weak assumptions on the loss function, the by $1/\sqrt{n}$ rescaled distance ($n$ being a control parameter of the width, see Appendix~\ref{app:clt_rnn}) of weights at initialization time and at (a finite) optimization step $t$, $||W(0)-W(t)||_{op}/\sqrt{n}$, converges uniformly to zero for $n\to\infty$ on $t\in[0,T]$~\cite{du2018gradient,jacot2018neural}.
This indicates that we can expect to find Gaussian processes  also for infinitely wide, \emph{trained}  NNs with MC dropout (we leave a rigorous proof of this sketched argument for future work). This theoretical insight stands in contrast to the non-Gaussian behavior of trained wide NNs we obtain in the experiments outlined in the subsequent section. We therefore will focus the investigation of the implications  (strongly) correlated (pre-)activations in Sec.~\ref{sec:strongly_correlated}.


\section{Ambiguous Empirical Observations}\label{sec:empirical}

To substantiate our discussions in Secs.~\ref{sec:random_nn} and \ref{sec:strongly_correlated} empirically, we study the pre-activations of a fully connected NN of the form given in Eq.~\ref{eq:def_f_mu_j_paper}.
We train it for classification on FashionMNIST \cite{xiao2017fashion} using a cross-entropy loss and
hyperbolic tangent (tanh) non-linearities. We employ a \textit{narrow} network $\mathcal{H}_{\rm narrow}$ with $k = 9$ hidden layers of width $h_\nu(n) = h = 100$ each as well as a \textit{wide} network $\mathcal{H}_{\rm wide}$ with $k = 7$ and $h = 1000$. All network weights are initialized with \iid~random values. We train for $100$ epochs using the Adam \cite{kingma2014adam} optimizer and an initial learning rate of $0.001$ (see Appendix \ref{app:empirical} for more implementation details). Bernoulli dropout with $p = 0.2$ is applied to the activations of all hidden network layers.

\begin{figure}[bth]
    \centering
    \includegraphics[width=0.4\textwidth]{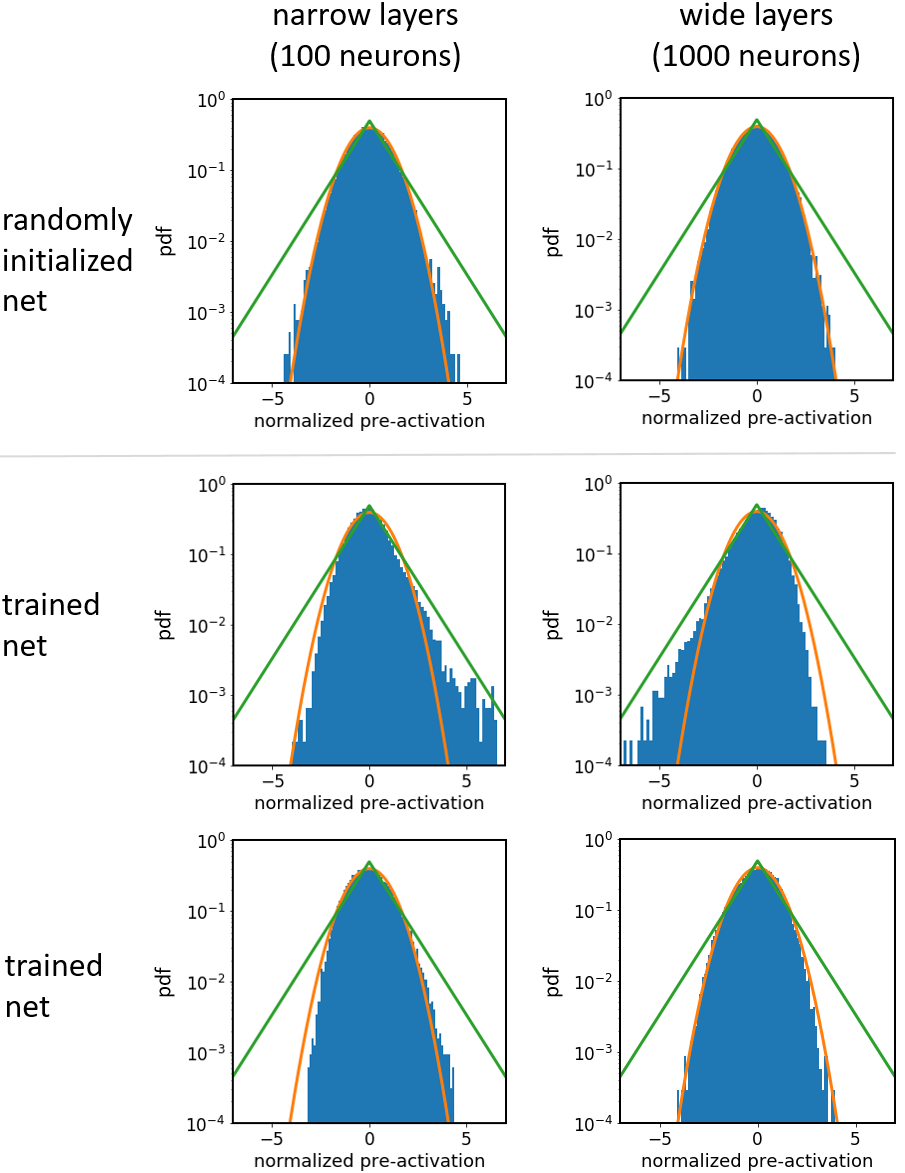}
    \caption{Normalized pre-activation distributions of selected neurons from different networks. We consider randomly initialized (top row) and trained (middle and bottom row) nets that are either narrow ($h = 100$, left column) or wide ($h = 1000$, right column). We handpicked the neurons from the trained networks (see text). Standard normal (orange) and double exponential (green) densities are given for comparison.}
    \label{fig:pre_act_distributions}
\end{figure}
First, we study the pre-activation distributions of the untrained $\mathcal{H}_{\rm narrow}$ and $\mathcal{H}_{\rm wide}$. These distributions are generated by running $30{,}000$ forward passes with dropout for a fixed test image. We randomly pick one neuron from the last hidden layer of each network and visualize its pre-activation distribution in Fig.~\ref{fig:pre_act_distributions} (top row). To ease visual comparison between different neurons, we report normalized pre-activations. The resulting distributions are clearly Gaussian, compare our discussion in Sec.~\ref{sec:random_nn}. The weight correlations and pre-activation correlations of these untrained networks are studied in Appendix \ref{app:empirical}.

Analyzing the trained networks $\mathcal{H}_{\rm narrow}$ and $\mathcal{H}_{\rm wide}$ yields more complex results. The pre-activation distributions for exemplarily selected hidden units from the last hidden layer of the trained $\mathcal{H}_{\rm narrow}$ and $\mathcal{H}_{\rm wide}$ are shown in Fig.~\ref{fig:pre_act_distributions} (middle and bottom row). The 
neurons are handpicked to illustrate the variety of distributions we encounter in this layer. We observe Gaussian as well as non-Gaussian pre-activation distributions that range over skewed Gaussians to clearly non-Gaussian, exponential ones
\footnote{Technically also the tails of a Gaussian decay exponentially, however with $\exp(-\xi^2)$. Throughout this text we refer to exponential tails only for the cases of $\exp(-|\xi|)$ or slower decay.}.
For $\mathcal{H}_{\rm wide}$, approximately $40\%$ of all neurons have Gaussian distributions, $40\%$ skewed Gaussians and $20\%$ exponential ones. Each histogram in \mbox{Fig.~\ref{fig:pre_act_distributions}} is based on $30{,}000$ forward passes with dropout.

Moreover, we find these observations to depend on the input image: training inputs yield less tailed distributions than test inputs that in turn lead to less pronounced tails than artificial ones that are superpositions of two training images. 
However, there are large differences within those input categories, \eg between test images. Shuffling the trained weight matrices yields results that are similar to the randomly initialized case. This indicates that weight and pre-activation dependencies are important while changes of the marginal distributions due to model training are not. While these results apply for the \textit{last} hidden layer, we observe mostly Gaussian or slightly skewed Gaussians in earlier layers. In Sec.~\ref{sec:strongly_correlated}, we demonstrate on a toy model that non-Gaussian properties accumulate during the propagation through the network. Further empirical observations and an analysis of the weight correlations and pre-activation correlations of the trained networks can be found in Appendix \ref{app:empirical}.

These complex dependence structures are the result of an interplay of weight distributions, input-immanent structures, and network non-linearities that are orchestrated by network training. A theoretical foundation for the Gaussian results was laid out in the previous section. For the deviations we take a closer look at correlations in the next section.


\section{Completing the Picture - Strongly Correlated Systems}
\label{sec:strongly_correlated}

As we have seen in \eq{def_variance_fi_fj} the (pre-)activations from any layer $\nu$ are, in general, not independent from one another.
We explore possible consequences of this observation in terms of consecutive toy experiments.
For this, the product of \textit{two} RVs, $Z = XY$, is central.
On an abstract level, $Y$ represents a random activation from a previous layer and $X$ the product $w_{ij} z_j$.
For later simplicity, we model both terms as Gaussian, $X,Y\sim \mathcal{N}(\mu,\sigma)$.
In the case of vanishing mean, $\mu=0$, one obtains the well known analytic form for the PDF of $Z$,
\begin{equation}
    \operatorname{PDF}_{Z}(\xi) = \frac{1}{\pi} K_0(|\xi|) \qquad (\sigma=1) \enspace,
    \label{eq:zN1PDF}
\end{equation}
with the modified Bessel function $K_0$.
Derivations and fur\-ther comments are relegated to Appendix \ref{app:toy_model}.
In contrast to the normal distributions its asymptotic,
\begin{equation}
    \operatorname{PDF}_{Z}(\xi) \sim \frac{1}{\sqrt{2\pi |\xi|}} \eu^{-|\xi|} \qquad (\sigma=1,\,|\xi|\gg 1) \enspace,
    \label{eq:zN1PDFasympt}
\end{equation}
reveals exponentially decaying tails.

For any given neuron we encounter sums over $h$ terms, where $h$ denotes the layer width, instead of single products.
If these summands $X_i Y_i$ were independent as argued for in the first section, one would recover a normal distributed outcome for the neuron by the central limit theorem.
But if they are correlated, the result may  differ drastically.
The outcome strongly depends on the respective covariance matrices of $X_i,Y_i$, for details see Appendix \ref{app:toy_model}.
As a rule of thumb, larger correlations favor non-Gaussian results.
This can be illustrated in terms of a toy extension choosing
\begin{equation}
    Z=\sum_{i=1}^h X_i Y_i\,,
    \quad
    X_i = c\,x_0 + (1-c)\,x_i
    \label{eq:corrToy}
\end{equation}
and similarly for $Y_i$ with Gaussian RVs $x_\gamma,y_\gamma$ for $\gamma = 0,\allowbreak 1,\ldots h$.
The factor $c$ controls a global correlation among the entries.
For this case we find the decomposition
\begin{equation}
    \begin{split}
        \sum_{i=1}^h X_i Y_i = &(1-c)^2 \sum_{i=1}^h x_i\,y_i+c^2 h x_0\,y_0
        \\
        & + c\,(1-c)\left(x_0 \sum_{i=1}^h y_i + y_0 \sum_{i=1}^h x_i \right)\enspace.
    \end{split}
\end{equation}
Heuristically speaking, the first term is of Gaussian nature while the second (and later ones) contribute exponential tails, see the limits for $c=0$ or $1$, respectively.
An important observation is that both terms are on similar footing with respect to $h$, \ie also in the large $h$ limit the Gaussian term will not suppress the first one.
This finding is due to the choice of a global correlation present in all $h$ terms of $x_i$ and $y_i$.

Assuming that the input $Y$ has exponential tails, we can investigate how those propagate for the $Z=XY$ model (given $\mu_x=0$).
For this, we take
\begin{equation}
    \operatorname{PDF}_{Y}(\xi) \propto \frac{1}{|\xi|^{(n-1)/n}}\exp{\left(-\alpha |\xi|^{2/n}\right)}
    \qquad n\in\mathbb{N}
    \enspace 
\end{equation}
as an analytically tractable approximation capturing the tail behavior.
For $n=1$, \ie Gaussian-like decay, we obtain a Bessel result for $Z$ comparable to Eq.~\eqref{eq:zN1PDF}, in which the tail is modeled by $n=2$, see Eq.~\eqref{eq:zN1PDFasympt}.
More generally, for small values of $n$ we obtain the explicit asymptotics for $Z$ as:
\begin{equation}
    \operatorname{PDF}_{Z}(\xi) \sim \frac{1}{|\xi|^{n/(n+1)}}\exp{\left(-\kappa^{1/(n+1)} |\xi|^{2/(n+1)} \right)}
    \label{eq:toyGenAsympt}
\end{equation}
with $|\xi|\gg 1 $ and some positive constant $\kappa > 0$ depending on $\alpha^n/\sigma_X^2$.
As the decay slows down with each iteration, this effect might contribute to an ``accumulation'' of tails.

Inspired by the presented heuristics, we revisit the randomly initialized NN introduced in Sec.~\ref{sec:empirical}.
However, we initialize the weight matrices in a \textit{correlated} fashion similar to the toy model in \eq{eq:corrToy}, see Appendix \ref{app:toy_model}.
The resulting distributions for the pre-activations given random input are shown in Fig.~\ref{fig:net_with_correlated_init}.
With $c=0.1$ the correlation is deliberately small and the dropout rate $p=0.2$ set as before, but we investigate deeper networks of $L=50$ layers.
The left panel shows that for $h=1000$ only the initial layer follows a Gauss\-ian behavior and all later ones exhibit exponential tails of increasing strength. 
A closer look reveals that the decay, especially for later layers, is slightly weaker than exponential, which might be caused by the effects presented in Eq.~\eqref{eq:toyGenAsympt}.
On the right hand side we compare the pre-activations of the 13\textsuperscript{th} layer for different network widths, from shallow ($h=100$) to wide ($h=2000$).
Except for small fluctuations of the tails, which besides statistics are caused by choosing different (fixed) random inputs for each network, the result is stable with $h$, suggesting that also the infinite width limit will not lead to a Gaussian outcome.
\begin{figure}[bth]
    \centering
    \includegraphics[width=1\columnwidth]{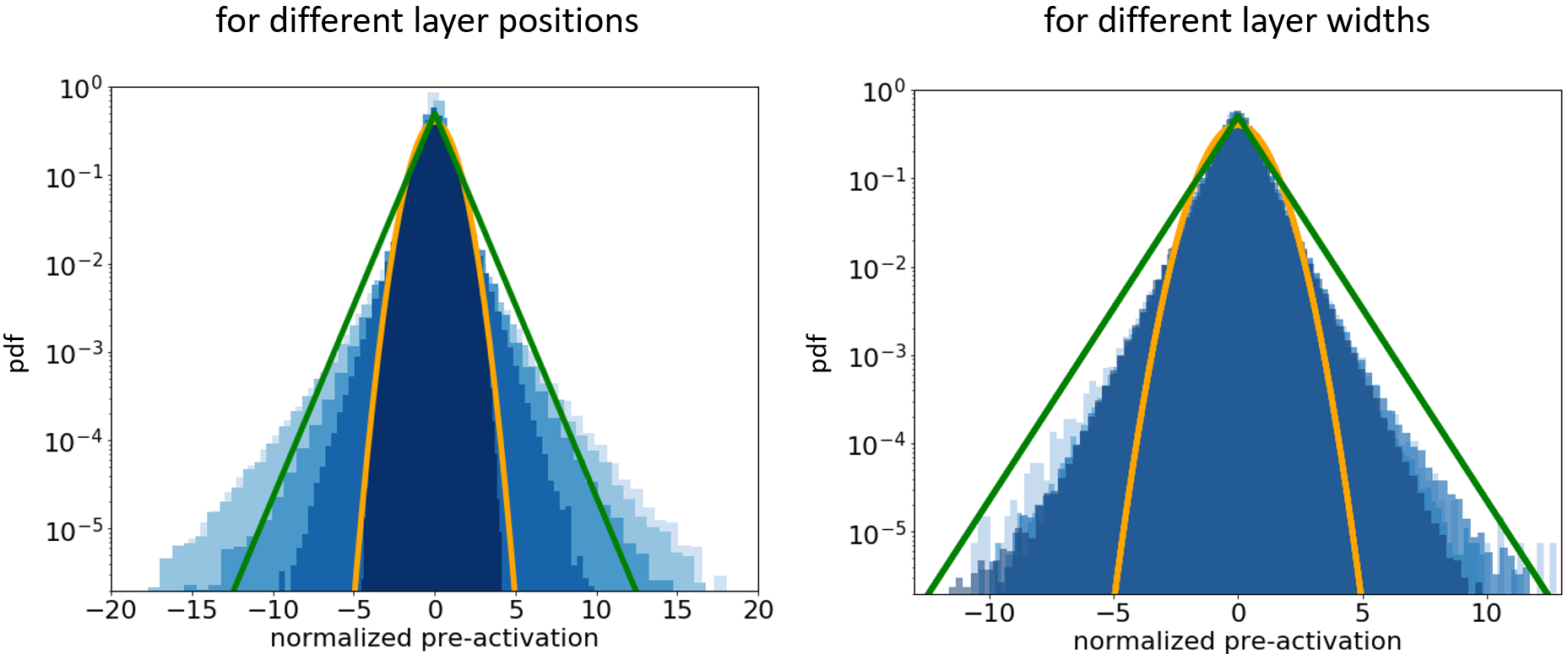}
    \caption{Normalized pre-activation distributions from networks with correlated random weights, details in text. On the l.h.s.~we color-code hidden layer position, first (dark blue) to last (light blue). The r.h.s.~shows different network widths (narrow (light blue) to wide (dark blue)) for layer 13. Gaussian (orange) and exponential tailed distribution (green) are shown for comparison.}
    \label{fig:net_with_correlated_init}
\end{figure}

Figs.~\ref{fig:weight_correlations} and \ref{fig:pre_act_correlations} in Appendix \ref{app:empirical} show the empirical correlations of the trained network from Sec.~\ref{sec:empirical}.
While their structure is more involved, we find that the correlation of the pre-activations increases for deeper layers.
More importantly, both correlations do not decrease to zero with increased layer width, which satisfies a central assumption of the toy model here.
It might therefore be less surprising that the trained network can exhibit exponentially tailed pre-activations, \cf Fig.~\ref{fig:pre_act_distributions}.

\section{Discussion}\label{sec:discussion}


We rigorously proved that random neural networks with fixed parameters under dropout converge to Gaussian processes in the limit of infinitely wide layers. A sketched argumentation fuels hopes that a similar behavior can be shown for weakly correlated, infinitely wide networks that are trained with (full-batch) gradient descent. 

Empirically studying wide (trained) networks with a rich dependence structure reveals a more complex picture: the coexistence of Gaussian and non-Gaussian, exponential, latent distributions. We shed light on this observation using a simple toy model and a network with correlated random initialization. These two systems indicate the existence of (at least) two regimes: one of \emph{weakly} correlated pre-activations with Gaussian distributions and another one of \emph{strongly} correlated pre-activations with exponentially tailed limiting distribution functions. Future work will investigate these two regimes in more detail to understand how to deliberately manipulate the properties of these limiting distributions under MC dropout. Searching for the borders of these regimes and further classes of limiting distributions is another research path.

Our empirical observation of Gaussianity that decreases from training data over test data to out-of-distribution data, gives rise to both theoretical and applied future work: firstly, to better understand the generalization properties of the \textit{learned} Gaussian or non-Gaussian behavior and, secondly, to employ this knowledge to construct more robust uncertainty quantifications.

\textit{Acknowledgement} --- The research of the authors M.\ Akila, S.\ Houben and J.\ Sicking was funded by the German Federal Ministry for Economic Affairs and Energy within the project ``KI Absicherung – Safe AI for Automated Driving''. Said authors would like to thank the consortium for the successful cooperation. The work (T.\ Wirtz) was developed by the Fraunhofer Center for Machine Learning within the Fraunhofer Cluster for Cognitive Internet Technologies. This work of A.~Fischer was supported by the Deutsche Forschungsgemeinschaft (DFG, German Research Foundation) under Germany’s Excellence Strategy – EXC-2092 \textsc{CaSa} – 390781972.




\bibliography{references}

\begin{thebibliography}{18}
\providecommand{\natexlab}[1]{#1}
\providecommand{\url}[1]{\texttt{#1}}
\expandafter\ifx\csname urlstyle\endcsname\relax
  \providecommand{\doi}[1]{doi: #1}\else
  \providecommand{\doi}{doi: \begingroup \urlstyle{rm}\Url}\fi

\bibitem[Billingsley(1995)]{Billingsley:prob_and_measure}
Billingsley, P.
\newblock \emph{Probability and Measure - 3rd ed.}
\newblock Wiley-Interscience, 1995.

\bibitem[Dashti \& Stuart(2017)Dashti and Stuart]{Dashti2017}
Dashti, M. and Stuart, A.~M.
\newblock {The Bayesian approach to inverse problems}.
\newblock In \emph{Handbook of Uncertainty Quantification}. Springer
  International Publishing, 2017.

\bibitem[de~G.~Matthews et~al.(2018)de~G.~Matthews, Hron, Rowland, Turner, and
  Ghahramani]{Matthews2018}
de~G.~Matthews, A.~G., Hron, J., Rowland, M., Turner, R.~E., and Ghahramani, Z.
\newblock {G}aussian process behaviour in wide deep neural networks.
\newblock In \emph{International Conference on Learning Representations}, 2018.

\bibitem[Du et~al.()Du, Lee, Li, Wang, and Zhai]{du2018gradient}
Du, S.~S., Lee, J.~D., Li, H., Wang, L., and Zhai, X.
\newblock Gradient descent finds global minima of deep neural networks.
\newblock In \emph{Proceedings of the 36th International Conference on Machine
  Learning, {ICML} 2019}, pp.\  1675--1685.

\bibitem[Gal \& Ghahramani(2016)Gal and Ghahramani]{gal2016dropout}
Gal, Y. and Ghahramani, Z.
\newblock Dropout as a {B}ayesian approximation: {R}epresenting model
  uncertainty in deep learning.
\newblock In \emph{International Conference on Machine Learning}, pp.\
  1050--1059, 2016.

\bibitem[Gal et~al.(2017)Gal, Hron, and Kendall]{gal2017concrete}
Gal, Y., Hron, J., and Kendall, A.
\newblock Concrete dropout.
\newblock In \emph{Advances in Neural Information Processing Systems}, pp.\
  3581--3590, 2017.

\bibitem[Jacot et~al.(2018)Jacot, Gabriel, and Hongler]{jacot2018neural}
Jacot, A., Gabriel, F., and Hongler, C.
\newblock Neural tangent kernel: Convergence and generalization in neural
  networks.
\newblock In \emph{Advances in Neural Information Processing Systems}, pp.\
  8571--8580, 2018.

\bibitem[Kendall \& Gal(2017)Kendall and Gal]{kendall2017uncertainties}
Kendall, A. and Gal, Y.
\newblock What uncertainties do we need in {B}ayesian deep learning for
  computer vision?
\newblock In \emph{Advances in Neural Information Processing Systems}, pp.\
  5574--5584, 2017.

\bibitem[Kingma \& Ba(2015)Kingma and Ba]{kingma2014adam}
Kingma, D.~P. and Ba, J.
\newblock Adam: {A} method for stochastic optimization.
\newblock In \emph{3rd International Conference on Learning Representations,
  {ICLR} 2015, Conference Track Proceedings}, 2015.

\bibitem[Lee et~al.(2018)Lee, Sohl-Dickstein, Pennington, Novak, Schoenholz,
  and Bahri]{lee2018deep}
Lee, J., Sohl-Dickstein, J., Pennington, J., Novak, R., Schoenholz, S., and
  Bahri, Y.
\newblock Deep neural networks as {G}aussian processes.
\newblock In \emph{International Conference on Learning Representations}, 2018.

\bibitem[Miller et~al.(2018)Miller, Nicholson, Dayoub, and
  S{\"u}nderhauf]{miller2018dropout}
Miller, D., Nicholson, L., Dayoub, F., and S{\"u}nderhauf, N.
\newblock Dropout sampling for robust object detection in open-set conditions.
\newblock In \emph{International Conference on Robotics and Automation}, pp.\
  1--7. IEEE, 2018.

\bibitem[Neal(1994)]{Neal1994}
Neal, R.~M.
\newblock \emph{Bayesian Learning for Neural Networks}.
\newblock PhD thesis, University of Toronto, Dept.of Computer Science, 1994.

\bibitem[Osband(2016)]{osband2016risk}
Osband, I.
\newblock Risk versus uncertainty in deep learning: Bayes, bootstrap and the
  dangers of dropout.
\newblock In \emph{Neural Information Processing Systems Workshop on Bayesian
  Deep Learning}, volume 192, 2016.

\bibitem[Paszke et~al.(2019)Paszke, Gross, and Massa]{pytorch}
Paszke, A., Gross, S., and Massa, F.
\newblock {PyTorch}: An imperative style, high-performance deep learning
  library.
\newblock In \emph{Advances in Neural Information Processing Systems 32}, pp.\
  8026--8037. 2019.

\bibitem[Press \& Wolf(2017)Press and Wolf]{press2016using}
Press, O. and Wolf, L.
\newblock Using the output embedding to improve language models.
\newblock In \emph{Proceedings of the 15th Conference of the European Chapter
  of the Association for Computational Linguistics, {EACL}, Volume 2: Short
  Papers}, pp.\  157--163. Association for Computational Linguistics, 2017.

\bibitem[Tsuchida et~al.(2019)Tsuchida, Roosta, and
  Gallagher]{tsuchida2019richer}
Tsuchida, R., Roosta, F., and Gallagher, M.
\newblock Richer priors for infinitely wide multi-layer perceptrons.
\newblock \emph{arXiv preprint arXiv:1911.12927}, 2019.

\bibitem[Wu et~al.(2019)Wu, Nowozin, Meeds, Turner, Hern{\'{a}}ndez{-}Lobato,
  and Gaunt]{wu2018deterministic}
Wu, A., Nowozin, S., Meeds, E., Turner, R.~E., Hern{\'{a}}ndez{-}Lobato, J.~M.,
  and Gaunt, A.~L.
\newblock Deterministic variational inference for robust {B}ayesian neural
  networks.
\newblock In \emph{7th International Conference on Learning Representations,
  {ICLR}}, 2019.

\bibitem[Xiao et~al.(2017)Xiao, Rasul, and Vollgraf]{xiao2017fashion}
Xiao, H., Rasul, K., and Vollgraf, R.
\newblock Fashion-{MNIST}: a novel image dataset for benchmarking machine
  learning algorithms.
\newblock \emph{arXiv preprint arXiv:1708.07747}, 2017.

\end{thebibliography}
\bibliographystyle{icml2020}


\newpage

\appendix
\section{Central Limit Theorem for Random Neural Networks}\label{app:clt_rnn}

The network architectures considered here, are  fully connected and therefore defined by the width of the hidden layers, which we will denote by $\hiddenDim_0,\dots,\hiddenDim_{k+1}$. 
The depth and the size of the input and the output layer of the NN, 
$\hiddenDim_0 = d$ and $\hiddenDim_{k+1}=L$, respectively, are kept fix during the analysis. More formally, we define a set of \emph{width functions} $h_{\nu}:\mathbb{N}\rightarrow \mathbb{N}, ~n\mapsto h_{\nu}(n), \text{for }\nu=1,\dots,k$ in which $n$ is a control parameter meaningless for actual applications and $h_{\nu}(n)$ specifies the number of hidden neurons at layer $\nu$ for a given input $n$. We consider the case where $n\rightarrow\infty$ implies $h_\nu(n)\to\infty$ and $h_{\nu}(n)/n < \infty$ for all $n$. Without loss of generality we assume the same non-linear activation function in each layer, denoted by $\nonlinear:\IR\rightarrow\IR$, $\phi$ to be (piecewise) Lipschitz continuous, i.e.~$|\phi(u)-\phi(v)| \leq K|u-v|$ for a fixed $K\in\IR_+$ and all $u,v\in\IR$, and that the weights $w_{ij}^{(\nu)}$ and biases $b_{ij}^{(\nu)}$, 
are drawn from a standard normal 
distribution $\mathcal{N}(0,1)$. 
Let $x\in \IR^d$ be a bounded input vector to the NN, that we parametrize as follows
\begin{align}
 f^{(\nu)}_i(x) &=\frac{1}{\sqrt{h_{\nu-1}(n)}} \sum_{j=1}^{h_{\nu-1}(n)} \weights_{ij}^{(\nu)} g^{(\nu-1)}_j(x) + b_i^{(\nu)}~,  \label{eq:def_f_mu_j}\\
 g^{(\nu)}_i(x) &=\left\{
                \begin{array}{ll}
                 \dropoutVar^{(\nu)}_{i}\nonlinear\left(f^{(\nu)}_i(x)\right)&,\nu=1,\dots,k-1\\
                 x_i&,\nu=0
                \end{array}
              \right.~,\label{eq:def_g_mu_j}
\end{align}\label{eq:def_f_0_j}
where $i=1,\dots,h_{\nu}(n)$, $\nu=0,\dots,k$, and $\dropoutVar^{(\nu)}_{i} \in \{0,1\}$ are independent binary random variables with $\prob(\dropoutVar^{(\nu)}_{j} = 1) = q$ for all $i,j,\nu$. For analytic tractability, we do not consider dropout in the input layer because its dimension is kept fixed for $n\to \infty$.

The proof that random neural networks with MC dropout converge to a Gaussian Processes relies on the same mathematical model as proposed in \cite{Matthews2018}, which we summarize for completeness. In order to study the limiting distribution of the pre-activations, in layer $\nu$, we set its width to infinity, $h_\nu(n)=\infty$ and embed it into $\IR^\infty$. Convergence in distribution in this abstract space can be studied only with respect to a certain topology. Following~\cite{Billingsley:prob_and_measure} this topology is generated by a metric $\rho$:
\begin{align*}
    \rho(u,v) = \sum_{i=1}^\infty \frac{\text{min}\left\{1,|v_i-u_i|\right\}}{2^i}
\end{align*}
According to~\cite{Billingsley:prob_and_measure} this metric metricise the product topology of the product of countable many copies of $\IR$ with the usual Euclidean topology~\cite{Dashti2017}. Moreover, to prove weak convergence in such an abstract space, it is sufficient to prove weak convergence for each finite dimensional marginal. To study the convergence behavior of the finite dimensional marginals simultaneously~\cite{Matthews2018} suggested to use the Cramér-Wold device.  
\begin{maththm}
For random vectors $X_n=(X_{n1},\dots,X_{nM})$ and $Y=(Y_{1},\dots,Y_{M})$, a necessary and sufficient condition for a $X_n$ converging in distribution to $Y$ ($X_n \Rightarrow Y$) is that  
\begin{align}
    \sum_u t_u X_{nu} \Rightarrow \sum_u t_u Y_{u}~, \quad \forall~t \in \IR^M~.
    \label{eq:projective_convergence}
\end{align}
That is, if every linear combination of the coordinates of $X_n$ converges in distribution to the correspondent linear combination of coordinates of $Y$.
\end{maththm}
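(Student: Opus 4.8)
The statement is the Cram\'er--Wold device, and the natural route is through characteristic functions together with L\'evy's continuity theorem. The plan is to reduce both directions to a single algebraic identity relating the characteristic function of the scalar projection $\sum_u t_u X_{nu}$ to the joint characteristic function of the vector $X_n$. Writing $\varphi_{X_n}(t)=\mathbb{E}[\eu^{\iu\langle t,X_n\rangle}]$ for $t\in\IR^M$ and denoting by $\psi_{n,t}(s)=\mathbb{E}[\eu^{\iu s\sum_u t_u X_{nu}}]$ the characteristic function of the scalar variable $\sum_u t_u X_{nu}$, the key observation is simply $\psi_{n,t}(s)=\varphi_{X_n}(st)$, and analogously $\psi_t(s)=\varphi_Y(st)$ for the limit object.

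First I would invoke L\'evy's continuity theorem in $\IR^M$: the convergence $X_n \Rightarrow Y$ holds if and only if $\varphi_{X_n}(t)\to\varphi_Y(t)$ pointwise for every $t\in\IR^M$. Likewise, for each fixed $t$, the one-dimensional continuity theorem says that $\sum_u t_u X_{nu} \Rightarrow \sum_u t_u Y_u$ holds if and only if $\psi_{n,t}(s)\to\psi_t(s)$ for all $s\in\IR$. Necessity is then immediate: if $X_n \Rightarrow Y$ then $\varphi_{X_n}\to\varphi_Y$ everywhere, so in particular $\varphi_{X_n}(st)\to\varphi_Y(st)$ for all $s$ and all $t$, \ie $\psi_{n,t}(s)\to\psi_t(s)$ pointwise in $s$, which yields the scalar convergence for every $t$.

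For sufficiency I would read the same identity in the opposite direction. Assuming $\sum_u t_u X_{nu} \Rightarrow \sum_u t_u Y_u$ for every $t\in\IR^M$, it suffices to evaluate the scalar characteristic functions at the single point $s=1$: this gives $\varphi_{X_n}(t)=\psi_{n,t}(1)\to\psi_t(1)=\varphi_Y(t)$ for every $t$. Since $Y$ is a genuine prescribed random vector, $\varphi_Y$ is a bona fide characteristic function and in particular continuous at the origin, so L\'evy's continuity theorem in $\IR^M$ applies and delivers $X_n \Rightarrow Y$.

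This proof carries essentially no deep obstacle; the only point requiring care is the appeal to the continuity theorem in the sufficiency direction, where one must certify that the limiting function $\varphi_Y$ is a valid characteristic function continuous at $0$. This is automatic in the present formulation because the limit $Y$ is fixed in advance rather than merely identified as an abstract weak limit, so the tightness argument underlying the most general version of L\'evy's theorem is not needed and the pointwise convergence of characteristic functions alone closes the argument.
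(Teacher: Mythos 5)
Your proof is correct: this statement is the classical Cram\'er--Wold device, and your argument --- the identity $\psi_{n,t}(s)=\varphi_{X_n}(st)$ combined with L\'evy's continuity theorem in both the scalar and the $M$-dimensional setting, with sufficiency closed by evaluating at $s=1$ and noting that $\varphi_Y$ is a bona fide characteristic function continuous at the origin --- is the standard one. Note that the paper itself gives no proof of this theorem; it quotes it as a known tool (following Billingsley's \emph{Probability and Measure}, as cited), so there is no internal proof to compare against, and your argument is precisely the canonical one found in that reference.
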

\noindent This theorem reduces the problem of weak convergence of the distribution of the finite marginals, to the convergence of the distribution of a one-dimensional random variable. Based on the Cramér-Wold theorem, we will show that 
\begin{align}
\psi_{\nu}(t) &= \frac{1}{s_n}\sum_{u=0}^M t_u\left( f^{(\nu)}_u(x) - \expectation{f^{(\nu)}_u(x)}\right)\\ 
\begin{split}
&=\frac{1}{s_n}\sum_{j=1}^{h_{\nu-1}(n)} \frac{1}{\sqrt{h_{\nu-1}(n)}}\sum_{u=0}^M t_u  \weights^{(\nu)}_{uj}\\&\times\left(\dropoutVar^{(\nu)}_{j}g^{(\nu-1)}_j(x)-q\expectation{g^{(\nu-1)}_j(x)}\right)
\end{split}
\label{eq:def_psi}\\
&\equiv\frac{1}{s_n}\sum_{j=1}^{h_{\nu-1}(n)}\gamma^{(\nu)}_{j,n}\,, \label{eq:def_psi_gamma}
\end{align}
where $\expectation{\psi_{\nu}(t)}=0$ and
\begin{align}
&s_n^{(\nu)2}=\sum_{j=1}^{h_{\nu-1}(n)}\sigma_{jn}^{(\nu)2}\label{def_psi_diag_variance}\\
    \begin{split}
        &=\sum_{j=1}^{h_{\nu-1}(n)} \sum_{u=0,v=0}^M \frac{t_u t_v\weights^{(\nu)}_{uj}\weights^{(\nu)}_{vj}}{h_{\nu-1}(n)}\\&\times\left(q\expectation{\left(g^{(\nu-1)}_j(x)\right)^2}-q^2\mathbf{E}_Z^2\left\{g^{(\nu-1)}_j(x)\right\}\right)~.
    \end{split}
        \label{def_psi_diag_variance_exp}
\end{align}
In the expression above we set $\mathbf{E}\{{\gamma^{(\nu)}_{j,n}}^2\}\equiv\sigma_{jn}^{(\nu)2}$. As defined above, $\psi_{\nu}(t)$ is a random variable with mean zero and unit variance. As mentioned earlier, see Eq.~\eqref{def_variance_fi_fj}, the pre-activations have a non-trivial correlation structure. Thus the activations are dependent random variables. Since the sums in the pre-activations are over dependent random variables, standard central limit theorems can not be applied. However, due to the \emph{self-averaging} property of the weights of the random neural network, we can study the behavior of the variables when the width goes to infinity. We summarize our observations in the following lemma. 
\begin{mathlem}\label{lemma:asymptotic_independence_gamma}
Let the pre-activations $\{ f^{(\nu-1)}_u(x)\}_{u\in\mathbb{N}}$ converge, for $n\to\infty$ in distribution, to independently distributed random variables, let $ f^{(\nu)}_i(x)$, $ g^{(\nu)}_i(x)$, $s_n^{(\nu)2}$ and $\gamma^{(\nu)}_{j,n}$ be as defined in Eq.~\eqref{eq:def_f_mu_j}, \eqref{eq:def_g_mu_j}, \eqref{def_psi_diag_variance} and \eqref{eq:def_psi_gamma}, respectively. Then it follows that $\psi^{(\nu)}(t)$ converges in distribution to the distribution of  $\psi^{(\nu)}(t)$ with $\gamma^{(\nu)}_{j,n}$ replaced by a random variable drawn from the \emph{marginalized distribution} of $\gamma^{(\nu)}_{j,n}$.

\end{mathlem}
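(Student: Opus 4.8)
The plan is to prove the claimed decoupling at the level of characteristic functions, exploiting that every dependence between distinct summands $\gamma^{(\nu)}_{j,n}$ is routed through the previous-layer pre-activations $\{f^{(\nu-1)}_j(x)\}_j$, which the hypothesis renders asymptotically independent. By the Cram\'er--Wold device and L\'evy's continuity theorem, the assertion is equivalent to showing that for every $\lambda\in\IR$ the characteristic function of $\psi^{(\nu)}(t)=\frac{1}{s_n}\sum_j\gamma^{(\nu)}_{j,n}$ agrees in the limit with that of its independent-marginal surrogate. As the surrogate has independent summands, its characteristic function is the product of the marginal ones, so the whole statement collapses to the decoupling estimate
\begin{equation*}
\left|\, \mathbf{E}\!\left[\prod_{j=1}^{h_{\nu-1}(n)} \eu^{\iu\lambda\gamma^{(\nu)}_{j,n}/s_n}\right] - \prod_{j=1}^{h_{\nu-1}(n)} \mathbf{E}\!\left[\eu^{\iu\lambda\gamma^{(\nu)}_{j,n}/s_n}\right]\right| \xrightarrow{n\to\infty} 0 \enspace.
\end{equation*}

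First I would condition on the collection $F=\{f^{(\nu-1)}_j(x)\}_j$ of previous-layer pre-activations. By \eqref{eq:def_psi_gamma} each $\gamma^{(\nu)}_{j,n}$ depends on $F$ only through the single scalar $f^{(\nu-1)}_j(x)$ and, beyond that, only on fresh dropout variables that are independent across $j$ and independent of $F$; hence the summands are \emph{conditionally independent} given $F$. The tower property then yields
\begin{equation*}
\mathbf{E}\!\left[\prod_{j}\eu^{\iu\lambda\gamma^{(\nu)}_{j,n}/s_n}\right] = \mathbf{E}_F\!\left[\prod_{j}\chi_j\!\left(f^{(\nu-1)}_j(x)\right)\right], \qquad \chi_j(\xi) = \mathbf{E}\!\left[\left.\eu^{\iu\lambda\gamma^{(\nu)}_{j,n}/s_n}\,\right|\, f^{(\nu-1)}_j(x)=\xi\right] \enspace,
\end{equation*}
where each factor $\chi_j$ is a continuous function, bounded by $1$, of a \emph{single} pre-activation, and the marginal characteristic function equals $\mathbf{E}_F[\chi_j(f^{(\nu-1)}_j(x))]$. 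The decoupling estimate has thus become the statement that the expectation of the growing product of the $\chi_j$ factorizes over the (still dependent) pre-activations in the limit.

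To make this precise I would write $\chi_j\equiv\chi_j(f^{(\nu-1)}_j(x))$ and use the telescoping identity
\begin{equation*}
\mathbf{E}_F\!\left[\prod_{j}\chi_j\right] - \prod_{j}\mathbf{E}_F\!\left[\chi_j\right] = \sum_{k}\left(\prod_{j<k}\mathbf{E}_F\!\left[\chi_j\right]\right)\,\text{Cov}\!\left(\chi_k,\ \prod_{j>k}\chi_j\right) \enspace,
\end{equation*}
in which each summand is the covariance between the factor $\chi_k$, a function of $f^{(\nu-1)}_k(x)$, and the product of the remaining factors, a function of $\{f^{(\nu-1)}_j(x)\}_{j>k}$. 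Under the induction hypothesis these pre-activations converge to independent variables, so every individual covariance tends to zero; moreover the $1/\sqrt{h_{\nu-1}(n)}$ scaling together with the self-averaging of the fixed Gaussian weights keeps $s_n^{(\nu)2}$ (see \eqref{def_psi_diag_variance}) at a deterministic, strictly positive limit and renders the summands uniformly infinitesimal, so that each factor deviates from unit modulus only by $O(1/h_{\nu-1}(n))$ and the whole product stays bounded.

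The main obstacle is controlling the \emph{sum} of these covariances over the growing range $k=1,\dots,h_{\nu-1}(n)$: the naive bound that uses only the $O(h_{\nu-1}^{-1/2})$ size of each fluctuation $\chi_k-\mathbf{E}_F[\chi_k]$ accumulates to $O(h_{\nu-1}^{1/2})$ and is useless, so one genuinely has to exploit the \emph{rate} at which the pre-activation dependence decays. Concretely, I would bound each covariance through the inter-pre-activation correlations of \eqref{def_variance_fi_fj}, which the self-averaging of the weights forces to vanish as $n\to\infty$ for $k\neq j$, and show that this decay outpaces the number of terms; balancing the vanishing dependence against the growth in the number of factors is the technical heart of the lemma. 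Once this decoupling is established, the summands may be treated as independent, and the Gaussianity of the limit of $\psi^{(\nu)}(t)$ follows separately from a Lindeberg--Lyapunov central limit theorem applied to the resulting triangular array.
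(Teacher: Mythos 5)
Your opening reduction---Cram\'er--Wold plus L\'evy continuity, collapsing the lemma to the vanishing of the decoupling quantity \eqref{absolute_value_psi_with_without_margin}---is exactly the paper's starting point, and your structural observation that the $\gamma^{(\nu)}_{j,n}$ are conditionally independent given the previous-layer pre-activations is correct. The problem is that everything after that is a plan rather than a proof, and the plan cannot be completed from the stated hypothesis alone. Your telescoping identity leaves you with $h_{\nu-1}(n)$ covariance terms, each of which you can only certify to be $o(1)$, while the uniform bounds you invoke (each factor within $O(1/h)$ of unit modulus, each fluctuation $O(h^{-1/2})$) give a total of order $O(\sqrt{h_{\nu-1}})$, as you yourself note. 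To close the argument you would need each covariance to be $o(1/h_{\nu-1}(n))$, i.e.\ a \emph{rate} of decorrelation; but the lemma's hypothesis is purely qualitative (convergence in distribution to independent limits) and supplies no rate, and Eq.~\eqref{def_variance_fi_fj}, which you propose to use, is a statement about the pre-activations of layer $\nu$, not a quantitative bound on covariances of bounded nonlinear functionals of layer $\nu-1$. So the step you call ``the technical heart of the lemma'' is not merely unfinished---the route you sketch for it has no source for the required quantitative input.

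The paper obtains that input from a different place: the Gaussian randomness of the fixed weights, not the hypothesis. It Taylor-expands each characteristic factor to second order via the estimate \eqref{upper_bound_complex_exp}, splitting \eqref{absolute_value_psi_with_without_margin} into the Lindeberg-type sum \eqref{expectation_Talyor_complex_exponent}---disposed of by the Lyapunov condition established separately in Lemma~\ref{proof_satisfaction_of_lypunov_condition}---and the product-difference term \eqref{second_order_approx}, which after expansion reduces to the polynomial expressions \eqref{mean_zero_symmetric_polyI} and \eqref{mean_zero_symmetric_polyII} in the $\gamma$'s. These are then controlled in high probability \emph{over the weights}: their expectation with respect to the layer-$\nu$ weights vanishes, Chebyshev's inequality bounds them by their weight-variance, and Wick-type counting of Gaussian moments (the Kronecker deltas arising from pairings) reduces the combinatorial order of the double sums in $h_{\nu-1}$ so that the bound is $O(1)$ times expectations of products of \emph{centered} activations. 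Only at this last stage is the hypothesis used, and only qualitatively: asymptotic independence makes those centered expectations factorize and hence vanish. This self-averaging over the weights is precisely the mechanism that achieves the balance you identified but left open; without importing it (or some comparable quantitative decorrelation estimate), your conditioning-plus-telescoping route does not yield the lemma.
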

\begin{proof}
According to the discussion above we prove that $\psi^{(\nu)}(t)$, as, defined in Eq.~\eqref{eq:def_psi}, converges to a Gaussian distributed random variable with mean zero and variance one. We first show that $\psi^{(\nu)}(t)$ converges in distribution to the distribution of  $\psi^{(\nu)}(t)$ with $\gamma^{(\nu)}_{j,n}$ replaced by a random variable drawn from the \emph{marginalized distribution} of $\gamma^{(\nu)}_{j,n}$. To prove this, we have to show that 
 \begin{align}
     &\left|\expectation{\prod_{j=1}^{h_{\nu-1}(n)}e^{\iu \lambda \gamma^{(\nu)}_{j,n}/s_n }}- \prod_{j=1}^{h_{\nu-1}(n)}\expectation{e^{\iu \lambda \gamma^{(\nu)}_{j,n}/s_n }} \right| \label{absolute_value_psi_with_without_margin}
 \end{align}
 converges to zero for $n\to\infty$. To simplify the expression above, we combine the following estimate~\cite{Billingsley:prob_and_measure} of the exponential function
 \begin{align}
 |(1+\iu \lambda x - \frac{\lambda^2}{2} x^2 ) - e^{\iu \lambda x}| \leq \text{min}\{|x\lambda|^2,|x\lambda|^3\} \label{upper_bound_complex_exp}
 \end{align}
with the following observation  
\begin{align}
\begin{split}
&\left|\expectation{\prod_{j=1}^{h_{\nu-1}(n)}e^{\iu \lambda \gamma^{(\nu)}_{j,n}/s_n }} -c \right|\leq \\& \expectation{\left|\left(1+\frac{\iu \lambda}{s_n} \gamma^{(\nu)}_{i,n} - \frac{\lambda^2}{s_n^2 2} {\gamma^{(\nu)}_{i,n}}^2 \right)- e^{\iu \lambda \gamma^{(\nu)}_{i,n}/s_n }\right|} + \\&\left|\expectation{\left(1+\frac{\iu \lambda}{s_n} \gamma^{(\nu)}_{i,n} - \frac{\lambda^2}{s_n^2 2} {\gamma^{(\nu)}_{i,n}}^2 \right)\prod_{j\neq i}^{h_{\nu-1}(n)}e^{\iu \lambda \gamma^{(\nu)}_{j,n}/s_n }} - c \right| ~,\label{upper_bound_characteristic_func_psi}
\end{split}
\end{align}
where $c\in\mathbb{C}$. It holds for all $i\leq h_{\nu-1}(n)$ and is a simple consequence of the  fact that the characteristic function has absolute value one. Applying Eq.~\eqref{upper_bound_characteristic_func_psi} and~\eqref{upper_bound_complex_exp} to both terms  in expression~\eqref{absolute_value_psi_with_without_margin} multiple times, we end up with the following upper bound on Eq.~\eqref{absolute_value_psi_with_without_margin},
 \begin{align}
     &\left|\expectation{\prod_{j=1}^{h_{\nu-1}(n)}e^{\iu \lambda \gamma^{(\nu)}_{j,n}/s_n }}- \prod_{j=1}^{h_{\nu-1}(n)}\expectation{e^{\iu \lambda \gamma^{(\nu)}_{j,n}/s_n }} \right| \nonumber \\
     &\leq \sum_{j=1}^{h_{\nu-1}(n)} 2\expectation{\text{min}\left\{\left|\frac{\gamma^{(\nu)}_{j,n}\lambda}{s_n}\right|^2,\left|\frac{\gamma^{(\nu)}_{j,n}\lambda}{s_n}\right|^3\right\}}\label{expectation_Talyor_complex_exponent}\\
     \begin{split}
     +\left|\expectation{\prod_{j=1}^{h_{\nu-1}(n)}\left(1+\frac{\iu \lambda \gamma^{(\nu)}_{j,n}}{s_n} - \frac{\lambda^2}{s_n^22} {\gamma^{(\nu)}_{j,n}}^2 \right)} \right.\\\left.- \prod_{j=1}^{h_{\nu-1}(n)}\expectation{\left(1+\frac{\iu \lambda \gamma^{(\nu)}_{j,n}}{s_n}- \frac{\lambda^2}{2s_n^2} {\gamma^{(\nu)}_{j,n}}^2 \right)} \right|~.\label{second_order_approx}
     \end{split}
 \end{align}
The sum~\eqref{expectation_Talyor_complex_exponent} in the expression above, appears also in the proof of the Lindeberg central limit theorem. Following~\cite{Billingsley:prob_and_measure} let $\epsilon>0$ be positive, then due to $\text{min}$-function in the expectation, the sum~\eqref{expectation_Talyor_complex_exponent} is at most
\begin{align}
\begin{split}
  &\frac{|\lambda|^2}{s_n^2}\sum_{j=1}^{h_{\nu-1}(n)}\int_{|\gamma^{(\nu)}_{j,n}|>\epsilon} \text{dP}_j(\gamma^{(\nu)}_{j,n})~ \gamma^{(\nu)2}_{j,n}  \\+&\frac{|\lambda|^3}{s_n^3}\sum_{j=1}^{h_{\nu-1}(n)}\int_{|\gamma^{(\nu)}_{j,n}|<\epsilon} \text{dP}_j(\gamma^{(\nu)}_{j,n}) \left|\gamma^{(\nu)}_{j,n}\right|^3 
  \end{split}\\
  \leq & \epsilon \frac{|\lambda|^3}{s_n} + \frac{|\lambda|^2}{s_n^2}\sum_{j=1}^{h_{\nu-1}(n)}\int_{|\gamma^{(\nu)}_{j,n}|>\epsilon} \text{dP}_j(\gamma^{(\nu)}_{j,n})~ \gamma^{(\nu)2}_{j,n}~,\label{appearence_Lindeberg_condition}
\end{align}
where $\text{dP}_j(\gamma^{(\nu)}_{j,n})$ is the marginal probability measure of $\gamma^{(\nu)}_{j,n}$. Setting $\epsilon\to\epsilon s_n$ the second term in Eq.~\eqref{appearence_Lindeberg_condition}, is exactly the Lindeberg condition. In Lindeberg-CLT a necessary condition for the in distribution convergence to a Gaussian distributed random variable is that the Lindeberg-Condition vanishes for $n\to\infty$. A necessary condition for the Lindeberg-condition to be satisfied is that the Lyapunov-condition~\eqref{eq:Lyapounov_condition} holds~\cite{Billingsley:prob_and_measure}. The latter is proven to be satisfied  in lemma~\ref{proof_satisfaction_of_lypunov_condition}. Thus, since  second term in expression~\eqref{appearence_Lindeberg_condition} converges to zero for $n\to\infty$ and that $\epsilon$ is arbitrary, the sum~\eqref{expectation_Talyor_complex_exponent} converges to zero.

To conclude the proof, it remains to show that expression~\eqref{second_order_approx} converges to zero for $n\to\infty$. First of all, we observe that $\mathbf{E}_Z\{1+\iu \lambda \gamma^{(\nu)}_{j,n}/s_n- \lambda^2 {\gamma^{(\nu)}_{j,n}}^2/2s_n^2 \}=1-\lambda^2\sigma_{jn}^{(\nu)2}/2s_n$. Expanding the products in Eq.~\eqref{second_order_approx}, it turns out that we have to show that the absolute values of
\begin{align}
    \sum_{i\in D} \expectation{\prod_{j=1}^{l_1}\gamma^{(\nu)}_{i_j,n}\prod_{k=l_1+1}^{l_1+l_2}\gamma^{(\nu)2}_{i_k,n}}~,\label{mean_zero_symmetric_polyI}
\end{align}
where $D = \{i=(i_1,\dots,i_{l_1+l_2})| 1\leq i_{1}< \dots <i_{l_1+l_2} \leq h_{\nu-1} \}$, converges to zero for $n\to\infty$, for all $l_1, l_2$ such that $l_1>1$ and $l_1+l_2\leq h_{\nu-1}$ as well as that the absolute value of
\begin{align}
    \sum_{i\in D'}\left(\expectation{\prod_{j=1}^{l}\gamma^{(\nu)2}_{i_j,n}}-\prod_{j=1}^{l}\sigma_{jn}^{(\nu)2}\right)~,\label{mean_zero_symmetric_polyII}
\end{align}
where $D' = \{i=(i_1,\dots,i_{l})| 1\leq i_{1}< \dots <i_{l} \leq h_{\nu-1} \}$, converges to zero for $n\to\infty$, for all $l$ such that $l\leq h_{\nu-1}$. Both claims follow similar argumentation. Due to the fact that the weights are Gaussian distributed random variables with zero mean and unit variance, it immediately follows that the mean with respect to the weights of layer $\nu$ is zero. Thus, we can apply Chebyshev's inequality $\mathbf{P}_W(|f(W)| \leq \sqrt{\text{Var}(f(W))/\delta})\geq 1-\delta $ to obtain a high probability bound on both expressions. In both cases the variance becomes a double sum over either $D$ or $D'$ which is at most of order $\mathcal{O}(h_{\nu-1}^{2l_1+2l_2})$, the summands contributed a factor of order $\mathcal{O}(h_{\nu-1}^{-l_1-2l_2})$ times the following average over the weights of layer $\nu$
\begin{align*}
    &\textbf{E}_W\left\{\prod_{o=1}^{l_1}w^{(\nu)}_{u_oi_o}w^{(\nu)}_{u_o'j_o}\prod_{b=l_1+1}^{l_1+l_2} w^{(\nu)}_{u_bi_b}w^{(\nu)}_{u_b'i_b} w^{(\nu)}_{u_o''j_b}w^{(\nu)}_{u_o'''j_b} \right\}~,
\end{align*}
where $ 1\leq i_{1}< \dots <i_{l_1+l_2} \leq h_{\nu-1} $ and $ 1\leq j_{1}< \dots <j_{l_1+l_2} \leq h_{\nu-1}$. Due to the Gaussian nature of the weights, the leading order contributions of the expression above are due to the second are of the form $\prod_{o=1}^{l_1}\delta_{i_oj_o}$. The Kronecker-Deltas lead to a reduction of the order of the sum, namely the reduce contributions of the double sum by a factors of $\mathcal{O}(h_{\nu-1}^{l_1})$. Thus, there exist a constant $C$ such that the variance of  expression~\eqref{mean_zero_symmetric_polyI} can be bounded, with at least probability $1-\delta$, by
\begin{align*}
&\frac{C}{\sqrt{\delta}}\left(\frac{||t||_2}{s_n}\right)^{2(l_1+2l_2)}\\&\times\max_{i,k\in D}\left[\textbf{E}_Z\left\{\prod_{j=1}^{l_1}\left(g^{(\nu-1)}_{i_j}-\expectation{g^{(\nu-1)}_{i_j}}\right)\right.\right.\\&\times\left.\prod_{b=l_1+1}^{l_1+l_2}\left(g^{(\nu-1)}_{i_b}-\expectation{g^{(\nu-1)}_{i_b}}\right)^2\right\} \\&\left.\times ~(i_j,i_b) ~\text{replaced by}~ (k_i,k_b)\right]+ \mathcal{O}(h_{\nu}^{-1})~.
\end{align*}
However, due to the assumptions, $s_n$ converges to a non-zero limit for $n\to\infty$ and the expectations asymptotically decompose into a product over the $i$s and $j$s which are zero. Thus the expression above and the expression~\eqref{mean_zero_symmetric_polyI} converge to zero  for all $\delta\in(0,1]$. Similar, we find that expression~\eqref{mean_zero_symmetric_polyII} can be  bounded, with at least probability $1-\delta$, by
\begin{align*}
&\frac{C}{\sqrt{\delta}}\left(\frac{||t||_2}{s_n}\right)^{4l}\\&\times\max_{i,j\in  D'}\left[\left(\textbf{E}_Z\left\{\prod_{b=1}^{l}\left(g^{(\nu-1)}_{i_b}-\expectation{g^{(\nu-1)}_{i_b}}\right)^2\right\}\right.\right. \\&\left.-\prod_{b=1}^{l}\expectation{\left(g^{(\nu-1)}_{i_b}-\expectation{g^{(\nu-1)}_{i_b}}\right)^2}\right)\\&\left.\times~ i_b ~\text{replaced by}~ j_b\right] + \mathcal{O}(h_{\nu}^{-1})~.    
\end{align*}
As above, by the assumptions, $s_n$ converges to a non-zero limit for $n\to\infty$ and the expectations asymptotically decompose into a product expectations over the $i$s and $j$s. However, the latter is exactly what is subtracted and thus the expression above and the expression in Eq.\eqref{mean_zero_symmetric_polyII} go to zero for all $\delta\in(0,1]$. Thus in summary, we finally showed that expression~\eqref{absolute_value_psi_with_without_margin} converges to zero for $n\to \infty$ with probability of at least one over the weights. 
\end{proof}
To finally conclude that of $\gamma^{(\nu)}_{j,n}$ in each layer $\nu=2,\dots,k$ become independent random variables, we have to show that this is the case for $\nu=2$. 
\begin{mathcor}
Let $ f^{(2)}_i(x)$, $ g^{(2)}_i(x)$, $s_n^{(2)2}$ and $\gamma^{(2)}_{j,n}$ be as defined in Eq.~\eqref{eq:def_f_mu_j}, \eqref{eq:def_g_mu_j}, \eqref{def_psi_diag_variance} and \eqref{eq:def_psi_gamma}, respectively. Then it follows that $\gamma^{(\nu)}_{j,n}$ are independent random variables.
\end{mathcor}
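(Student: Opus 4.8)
The plan is to read this corollary as the \emph{base case} of the induction whose inductive step is Lemma~\ref{lemma:asymptotic_independence_gamma}: that lemma assumes the layer-$(\nu-1)$ pre-activations are (asymptotically) independent and concludes the same for the summands $\gamma^{(\nu)}_{j,n}$ at layer $\nu$, so to propagate the statement up to layer $k$ one needs a first layer, namely $\nu=2$, at which the $\gamma^{(2)}_{j,n}$ are \emph{exactly} independent. Establishing this is a direct consequence of the network's layer-one structure.

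First I would unfold the definitions for $\nu=2$. The relevant activation is $g^{(1)}_j(x)=\dropoutVar^{(1)}_j\nonlinear(f^{(1)}_j(x))$ with, from Eq.~\eqref{eq:def_f_mu_j} and \eqref{eq:def_g_mu_j}, $f^{(1)}_j(x)=\tfrac{1}{\sqrt{d}}\sum_{i=1}^d \weights^{(1)}_{ij}x_i+\bias^{(1)}_j$. The decisive observation is that dropout is \emph{not} applied in the input layer ($g^{(0)}_i(x)=x_i$) and the weights and biases form a fixed set; hence $f^{(1)}_j(x)$ is a deterministic quantity carrying no dropout randomness, and $g^{(1)}_j(x)=\dropoutVar^{(1)}_j\,c_j$ with the deterministic constant $c_j:=\nonlinear(f^{(1)}_j(x))$. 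Thus the only random variable entering $g^{(1)}_j(x)$ is the single Bernoulli variable $\dropoutVar^{(1)}_j$.

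Next I would read off the dependence structure of $\gamma^{(2)}_{j,n}$ from Eq.~\eqref{eq:def_psi} and \eqref{eq:def_psi_gamma}. Each $\gamma^{(2)}_{j,n}$ is assembled from the fixed weights $\{\weights^{(2)}_{uj}\}_{u=0}^M$, the activation $g^{(1)}_j(x)$, and the dropout variables carrying index $j$ (namely $\dropoutVar^{(1)}_j$ and $\dropoutVar^{(2)}_j$); every one of these objects carries the single hidden index $j$, so $\gamma^{(2)}_{j,n}$ is a measurable function of $\{\dropoutVar^{(1)}_j,\dropoutVar^{(2)}_j\}$ alone, with the fixed weights treated as constants under $\mathbf{E}_Z$. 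Since the dropout variables are mutually independent across all their indices, the index sets $\{\dropoutVar^{(1)}_j,\dropoutVar^{(2)}_j\}$ are pairwise disjoint for distinct $j$, and functions of disjoint collections of independent random variables are independent. Hence $\{\gamma^{(2)}_{j,n}\}_{j=1}^{h_1(n)}$ are mutually independent, which is the claim.

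The only delicate point, and thus the main obstacle, is the bookkeeping that establishes this disjointness: one must verify that no dropout variable from \emph{below} layer $1$ enters $g^{(1)}_j(x)$ and thereby couples different indices $j$. This is exactly where the modeling choice of omitting dropout in the input layer is used; without it the layer-$1$ activations would share the input-layer dropout variables and the $\gamma^{(2)}_{j,n}$ would already be dependent, forcing one to invoke the asymptotic argument of Lemma~\ref{lemma:asymptotic_independence_gamma} even at the first nontrivial layer. Together with that lemma as the inductive step, the corollary then propagates (asymptotic) independence of the summands through all layers $\nu=2,\dots,k$, as required by the proof of Theorem~\ref{infinite_nn_with_dropout_are_Gaussian_processes}.
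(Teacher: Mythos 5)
Your proposal is correct and takes essentially the same approach as the paper, whose one-sentence proof rests on exactly your key observation: with fixed weights and no dropout in the input layer, the layer-$1$ pre-activations are deterministic, so each $\gamma^{(2)}_{j,n}$ is a function only of the dropout variable(s) carrying index $j$, and independence across $j$ follows from the independence of the dropout variables. Your write-up merely spells out the bookkeeping that the paper compresses into a single sentence.
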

\begin{proof}
This follows from the definition of the pre-activations and the fact that the preceding layer is deterministic and does not have dropout. 
\end{proof}
Since $\psi^{(\nu)}(t)$ and $\psi^{(\nu)}(t)$ with $\gamma^{(\nu)}_{j,n}$ replaced by a random variable drawn from the \emph{marginalized distribution} of $\gamma^{(\nu)}_{j,n}$, have the same limiting distribution, we can study the limiting distribution of the form by studying the limiting distribution of the latter, which is a simple consequence of Lyapunov's central limit theorem.  
\begin{mathlem}\label{lemma:limit_gaus_margin_psi}
Let the pre-activations $\{ f^{(\nu-1)}_u(x)\}_{u\in\mathbb{N}}$ converge, for $n\to\infty$ in distribution, to independently distributed random variables, let $ f^{(\nu)}_i(x)$, $ g^{(\nu)}_i(x)$, $s_n^{(\nu)2}$ and $\gamma^{(\nu)}_{j,n}$ be as defined in Eq.~\eqref{eq:def_f_mu_j}, \eqref{eq:def_g_mu_j}, \eqref{def_psi_diag_variance} and \eqref{eq:def_psi_gamma}, respectively. Then  $\psi^{(\nu)}(t)$ converges, for $\nu=2,\dots,k$, in distribution to a Gaussian distributed random variable with zero mean and unit variance
\end{mathlem}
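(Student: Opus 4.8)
The plan is to reduce the claim to a textbook application of Lyapunov's central limit theorem, using the previous lemma to dispose of the dependence among the summands. First I would invoke Lemma~\ref{lemma:asymptotic_independence_gamma}: under the standing hypothesis that the layer-$(\nu-1)$ pre-activations $\{f^{(\nu-1)}_u(x)\}_{u\in\mathbb{N}}$ converge to independent random variables, that lemma guarantees that $\psi^{(\nu)}(t)$ shares its limiting distribution with the ``marginalized'' variable $\tilde\psi^{(\nu)}(t) = s_n^{-1}\sum_{j=1}^{h_{\nu-1}(n)} \tilde\gamma^{(\nu)}_{j,n}$, in which each $\gamma^{(\nu)}_{j,n}$ is replaced by an independent draw from its own marginal law. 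It therefore suffices to prove $\tilde\psi^{(\nu)}(t) \Rightarrow \mathcal{N}(0,1)$.

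Next, the $\tilde\gamma^{(\nu)}_{j,n}$ are, by construction, independent and centred, and by \eqref{def_psi_diag_variance} their variances sum to $s_n^{(\nu)2}$; thus $\tilde\psi^{(\nu)}(t)$ is a normalized triangular-array sum of independent, mean-zero random variables with aggregate variance one. Lyapunov's CLT then delivers convergence to $\mathcal{N}(0,1)$ once the Lyapunov moment condition is verified --- but this is precisely what is established in Lemma~\ref{proof_satisfaction_of_lypunov_condition}, and the proof of Lemma~\ref{lemma:asymptotic_independence_gamma} already records that it entails the Lindeberg condition. The base case $\nu=2$ is handled by the corollary above, where the $\gamma^{(2)}_{j,n}$ are genuinely independent so that the marginalization step is vacuous; combining the two statements then closes an induction on $\nu=2,\dots,k$.

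The main obstacle is not the CLT machinery itself but securing a nondegenerate normalization: one must show that $s_n^{(\nu)2}$ converges, for each fixed direction $t$, to a strictly positive and finite limit as $n\to\infty$, so that $\tilde\psi^{(\nu)}(t)$ neither concentrates to a point mass nor diverges. By \eqref{def_psi_diag_variance_exp} this reduces to checking that the self-averaging over the \iid Gaussian weights of layer $\nu$ produces well-defined limiting second moments $\expectation{(g^{(\nu-1)}_j(x))^2}$ that are bounded away from zero under the inductive hypothesis. Once this is in hand, $\psi^{(\nu)}(t)\Rightarrow\mathcal{N}(0,1)$ follows for every $t$; feeding this back through the Cramér-Wold device, together with the vanishing of the cross-covariances $\mathrm{Cov}(f^{(\nu)}_i,f^{(\nu)}_j)$ for $i\neq j$ implied by \eqref{def_variance_fi_fj}, upgrades the one-dimensional limit to joint Gaussianity with independent coordinates --- exactly the hypothesis required at layer $\nu+1$ and the form needed to conclude the theorem.
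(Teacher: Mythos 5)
Your proposal follows essentially the same route as the paper's proof: reduce to the marginalized (hence independent) summands via Lemma~\ref{lemma:asymptotic_independence_gamma}, then apply Lyapunov's central limit theorem (Theorem~\ref{lyapunos_clt}) with the moment condition supplied by Lemma~\ref{proof_satisfaction_of_lypunov_condition}, concluding since $\psi^{(\nu)}(t)$ has zero mean and unit variance by construction. The non-degeneracy of $s_n^{(\nu)}$ that you single out as the main obstacle is not verified in the paper either --- it is simply assumed as a hypothesis of Lemma~\ref{proof_satisfaction_of_lypunov_condition} --- and your closing remarks on Cram\'er--Wold and vanishing cross-covariances belong to Corollary~\ref{corollary:activations_are_Gaussian_Processes} rather than to this lemma, so they are extra scaffolding, not a different method.
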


\begin{proof}
Due to lemma~\ref{lemma:asymptotic_independence_gamma} it is enough to consider $\psi^{(\nu)}(t)$ with $\gamma^{(\nu)}_{j,n}$ replaced by a random variable drawn from the \emph{marginalized distribution} of $\gamma^{(\nu)}_{j,n}$. The random variables drawn from the marginalized distribution are independent by definition. From lemma~\ref{proof_satisfaction_of_lypunov_condition} it follows that they satisfy the Lyapunov-Condition~\eqref{eq:Lyapounov_condition}, so we can apply Lyapuno's central limit theorem~\ref{lyapunos_clt} to proof that  $\psi^{(\nu)}(t)$ converges in distribution to a Gaussian distributed random variable. Since it has by construction zero mean and unit variance, the claim follows immediately.  
\end{proof}
To finally prove that the pre-activation in layer $\nu$ have Gaussian distributed is a corollary of the results obtained so far.
\begin{mathcor}\label{corollary:activations_are_Gaussian_Processes}
Under the assumptions of lemma~\ref{lemma:limit_gaus_margin_psi}, the pre-activations of layer $\nu$, $\{ f^{(\nu)}_u(x)\}_{u\in\mathbb{N}}$, converge in distribution, for $n\to\infty$,  to independent Gaussian random variables.  
\end{mathcor}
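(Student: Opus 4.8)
The plan is to combine Lemma~\ref{lemma:limit_gaus_margin_psi} with the Cram\'er-Wold device stated above and then to pin down the covariance structure of the resulting limiting Gaussian vector. Fix $M\in\mathbb{N}$ and consider the centred finite marginal with components $f^{(\nu)}_u(x)-\expectation{f^{(\nu)}_u(x)}$ for $u=0,\dots,M$. By the definition of $\psi^{(\nu)}(t)$ in \eqref{eq:def_psi}, every linear combination $\sum_{u=0}^M t_u\big(f^{(\nu)}_u(x)-\expectation{f^{(\nu)}_u(x)}\big)$ equals $s_n^{(\nu)}\,\psi^{(\nu)}(t)$. Under the hypotheses of Lemma~\ref{lemma:limit_gaus_margin_psi} the normalisation $s_n^{(\nu)}$ converges to a deterministic, strictly positive limit $s_\infty(t)$; hence, by Slutsky's theorem together with Lemma~\ref{lemma:limit_gaus_margin_psi}, this linear combination converges in distribution to a centred Gaussian of variance $s_\infty(t)^2$. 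As this holds for every $t\in\IR^{M+1}$, the Cram\'er-Wold device yields that the marginal converges in distribution to a jointly Gaussian, zero-mean vector $Y=(Y_0,\dots,Y_M)$.

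Next I would identify the covariance matrix $\Sigma$ of $Y$, whose quadratic form in $t$ is $s_\infty(t)^2$. Reading off the explicit expression \eqref{def_psi_diag_variance_exp}, the entries $\Sigma_{uv}=\lim_{n\to\infty}\text{Cov}\big(f^{(\nu)}_u(x),f^{(\nu)}_v(x)\big)$ are, up to the common positive factor $q\expectation{(g^{(\nu-1)}_j(x))^2}-q^2\mathbf{E}_Z^2\{g^{(\nu-1)}_j(x)\}$, controlled by the limits of $\tfrac{1}{h_{\nu-1}(n)}\sum_{j}\weights^{(\nu)}_{uj}\weights^{(\nu)}_{vj}$. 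This is precisely the \emph{self-averaging} quantity that governs \eqref{def_variance_fi_fj}: for $u=v$ the summands $\weights^{(\nu)2}_{uj}$ have unit mean and their empirical average converges to a strictly positive constant, whereas for $u\neq v$ the products $\weights^{(\nu)}_{uj}\weights^{(\nu)}_{vj}$ are centred and their average vanishes as $n\to\infty$ for the \iid Gaussian weights. Consequently $\Sigma$ is diagonal.

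A zero-mean Gaussian vector with diagonal covariance has independent components, so $Y_0,\dots,Y_M$ are independent Gaussian random variables, and adding back the deterministic shifts $\expectation{f^{(\nu)}_u(x)}$ preserves independence. Since $M$ was arbitrary, every finite-dimensional marginal of $\{f^{(\nu)}_u(x)\}_{u\in\mathbb{N}}$ converges weakly to an independent Gaussian law, which by the metric $\rho$ on $\IR^\infty$ discussed above is equivalent to weak convergence of the whole sequence to independent Gaussians. Together with the base case, for which the corollary above shows the $\gamma^{(2)}_{j,n}$ to be independent because the layer feeding into $\nu=2$ is deterministic and free of dropout, this closes an induction over $\nu$: the independence of the limiting pre-activations in layer $\nu-1$ is exactly the hypothesis of Lemma~\ref{lemma:limit_gaus_margin_psi}, and the present corollary propagates it to layer $\nu$.

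The step I expect to be the main obstacle is the covariance computation above, \ie the rigorous proof that the off-diagonal limits vanish. Once the weight-dependent factor $g^{(\nu-1)}_j(x)$ is reinstated, the quantities $\tfrac{1}{h_{\nu-1}(n)}\sum_j \weights^{(\nu)}_{uj}\weights^{(\nu)}_{vj}\,(\cdots)$ are no longer sums of independent terms, so the argument must again rest on the Gaussian self-averaging together with a Chebyshev / high-probability bound over the weights, exactly in the spirit of Lemma~\ref{lemma:asymptotic_independence_gamma}. The remaining steps are then essentially bookkeeping.
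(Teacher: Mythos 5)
Your proposal is correct and follows essentially the same route as the paper: Cram\'er--Wold plus Lemma~\ref{lemma:limit_gaus_margin_psi} (together with Lemma~\ref{lemma:asymptotic_independence_gamma}) gives Gaussianity of the finite-dimensional marginals, and the vanishing of the off-diagonal limiting covariances then upgrades uncorrelatedness to independence. The covariance step you flag as the main obstacle is precisely what the paper isolates as Lemma~\ref{lemma:Limiting_Covariance_Structure}, proved by the same Chebyshev/self-averaging high-probability argument over the weights that you sketch.
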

\begin{proof}
From Lemma~\ref{lemma:limit_gaus_margin_psi}, Lemma~\ref{lemma:asymptotic_independence_gamma} and the Cramér-Wold theorem, it follows that every finite marginal of $f^{(\nu)}(x)\in \IR^\infty$  converges in distribution to a Gaussian distributed random variable. From which follows, that $\{ f^{(\nu)}_u(x)\}_{u\in\mathbb{N}}$ converges in distribution to a Gaussian process. Due to lemma~\ref{lemma:Limiting_Covariance_Structure} for all finite index-sets $M\subset \mathbb{N}$ the covariance the pre-activations $f^{(\nu)}_v(x)$ and $f^{(\nu)}_u(x)$ for $u\neq v$ and $u,v\in M$ converges to zero almost surely. Because the pre-activations converges to a uncorrelated Gaussian process, it follows, that  $\{ f^{(\nu)}_u(x)\}_{u\in\mathbb{N}}$ converge to independent random variables.
\end{proof}
\subsection{Limiting Covariance Structure of Layer $\nu$}
In this section we prove that the covariance structure of a finite sub set of the pre-activations in each layer becomes diagonal, in the limit of infinite width, i.e. for $n\to\infty$. 
\begin{mathlem} \label{lemma:Limiting_Covariance_Structure}
Let the pre-activations $\{ f^{(\nu-1)}_u(x)\}_{u\in\mathbb{N}}$ converge, for $n\to\infty$ in distribution to independently distributed random variables,  $ f^{(\nu)}_i(x)$ and $ g^{(\nu)}_i(x)$ be as defined in Eq.~\eqref{eq:def_f_mu_j}, \eqref{eq:def_g_mu_j} . Then, for $\nu=2,\dots,k$ and all finite index-sets $M\subset\mathbb{N}$, 
\begin{align*}
    &\text{Cov}\left(f^{(\nu)}_u,f^{(\nu)}_v\right)= \expectation{f^{(\nu)}_uf^{(\nu)}_v} - \expectation{f^{(\nu)}_u}\expectation{f^{(\nu)}_v}
\end{align*}
converges (almost surely) to zero for $n\to\infty$, whenever $u \neq v$, where $u,v\in M$.
\end{mathlem}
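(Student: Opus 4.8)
The plan is to read the covariance in Eq.~\eqref{def_variance_fi_fj} as a bilinear form in the weights of layer $\nu$ and to exploit their independence and Gaussianity. Writing $u,v$ for the two indices with $u\neq v$, note that the biases are deterministic and cancel, so that
\[
\text{Cov}\left(f^{(\nu)}_u,f^{(\nu)}_v\right)=\frac{1}{h_{\nu-1}(n)}\sum_{k,l=1}^{h_{\nu-1}(n)}\weights_{ul}^{(\nu)}\weights_{vk}^{(\nu)}\,A_{kl}\,,
\]
where $A_{kl}=q\big(\delta_{kl}+q(1-\delta_{kl})\big)\expectation{\nonlinear^{(\nu-1)}_k\nonlinear^{(\nu-1)}_l}-q^2\expectation{\nonlinear^{(\nu-1)}_k}\expectation{\nonlinear^{(\nu-1)}_l}$ collects the dropout expectations of the previous layer and therefore depends only on the weights of layers preceding $\nu$. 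The decisive structural observation is that for $u\neq v$ the rows $(\weights^{(\nu)}_{ul})_l$ and $(\weights^{(\nu)}_{vk})_k$ are independent, zero-mean, unit-variance Gaussian vectors, and both are independent of the matrix $A$.

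First I would condition on all weights of the layers preceding $\nu$ and average only over the layer-$\nu$ weights. Because $\expectationW{\weights^{(\nu)}_{ul}\weights^{(\nu)}_{vk}}=\expectationW{\weights^{(\nu)}_{ul}}\,\expectationW{\weights^{(\nu)}_{vk}}=0$ for $u\neq v$, the conditional mean of the covariance vanishes. Using $\expectationW{\weights^{(\nu)}_{ul}\weights^{(\nu)}_{ul'}}=\delta_{ll'}$ and the analogous identity in $k$, the conditional second moment collapses to
\[
\expectationW{\text{Cov}\left(f^{(\nu)}_u,f^{(\nu)}_v\right)^2}=\frac{1}{h_{\nu-1}(n)^2}\sum_{k,l=1}^{h_{\nu-1}(n)}A_{kl}^2\,.
\]
I would then split this sum into its diagonal and off-diagonal parts. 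On the diagonal each $A_{kk}=q\expectation{(\nonlinear^{(\nu-1)}_k)^2}-q^2\expectation{\nonlinear^{(\nu-1)}_k}^2$ is $\mathcal{O}(1)$, uniformly bounded through $q$, the Lipschitz continuity of $\nonlinear$, and the finiteness of the limiting second moments of the previous layer; hence $\sum_k A_{kk}^2=\mathcal{O}(h_{\nu-1}(n))$ and this part contributes a term of order $1/h_{\nu-1}(n)\to0$. Off the diagonal one has $A_{kl}=q^2\,\text{Cov}\big(\nonlinear^{(\nu-1)}_k,\nonlinear^{(\nu-1)}_l\big)$, and by the hypothesis that $\{f^{(\nu-1)}_u(x)\}$ converge to independent random variables these covariances vanish as $n\to\infty$; provided they do so uniformly in $k\neq l$, say $|A_{kl}|\le\epsilon_n\to0$, the off-diagonal contribution is at most $\epsilon_n^2\to0$. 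Together this gives $\expectationW{\text{Cov}(f^{(\nu)}_u,f^{(\nu)}_v)^2}\to0$.

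It then remains to upgrade this $L^2$ estimate to the almost-sure statement. Chebyshev's inequality immediately yields convergence in probability of the covariance to zero over the fixed set of weights. To obtain almost-sure convergence I would either track a summable rate $\expectationW{\text{Cov}(\cdot)^2}=o(n^{-1-\eta})$ and invoke Borel--Cantelli, or, more robustly, use that for $u\neq v$ the covariance is a homogeneous Gaussian chaos of order two in the independent vectors $(\weights^{(\nu)}_{u\cdot})$ and $(\weights^{(\nu)}_{v\cdot})$, so a Hanson--Wright-type inequality provides sub-exponential tail bounds whose controlling scale is $\|A/h_{\nu-1}(n)\|_F^2=\expectationW{\text{Cov}(\cdot)^2}\to0$; these tails are summable in $n$ and Borel--Cantelli delivers the claim, combined with the fact that the off-diagonal decay of $A$ holds almost surely in the earlier-layer weights by the inductive hypothesis.

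The step I expect to be the main obstacle is controlling the off-diagonal sum: pointwise vanishing of the previous-layer covariances does not suffice, since one sums on the order of $h_{\nu-1}(n)^2$ such terms and divides only by $h_{\nu-1}(n)^2$. One therefore needs the convergence to independence furnished by the inductive hypothesis to come with enough uniformity over the index pairs $k\neq l$, tracked jointly with the simultaneous growth of all widths $h_\nu(n)$. The second delicate point is precisely the passage from in-probability to almost-sure convergence, which is where the Gaussianity of the weights, through concentration of the bilinear form, does the decisive work.
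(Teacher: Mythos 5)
Your proposal follows essentially the same route as the paper's proof: both exploit that for $u\neq v$ the layer-$\nu$ weight rows are independent zero-mean Gaussians (so the covariance has zero mean over these weights), bound the covariance via Chebyshev by its second moment, which collapses to $h_{\nu-1}^{-2}\sum_{k,l}A_{kl}^2$, split this into a diagonal part of order $1/h_{\nu-1}$ and an off-diagonal part that vanishes by the hypothesis that the previous layer's pre-activations become independent, and conclude with a union bound over the pairs in $M$. If anything, you are more careful than the paper on exactly the two points you flag as delicate — the uniformity over index pairs needed for the off-diagonal sum, and the passage from in-probability to almost-sure convergence (the paper disposes of both by noting that $\delta\in(0,1]$ is arbitrary, without a Borel--Cantelli or concentration argument) — so these are not gaps in your argument but places where the published proof is itself loose.
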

\begin{proof}
To show that the covariance vanishes for $u\neq v$, we use high probability bounds on the weights. Since the expectation with respect to the weights of layer $\nu$ vanishes, we can bound the expression above using Chebyshev's inequality $\mathbf{P}_W(|f(W)| \leq \sqrt{\text{Var}_W(f(W))/\delta})\geq 1-\delta $. The variance is our case is simply the second moment $\expectationW{f^2(W)}$. With probability of at least $1-\delta$, we find that 
\begin{align*}
&\text{Cov}\left(f^{(\nu)}_v,f^{(\nu)}_u\right)  \leq \frac{1}{h_{\nu-1}^2\sqrt{\delta}}\sum_{l\neq k=1}^{h_{\nu-1}} 
 \Big( \expectation{ g^{(\nu-1)}_k g^{(\nu-1)}_l}\\&- \expectation{ g^{(\nu-1)}_k}\expectation{g^{(\nu-1)}_l} \Big)^2+\mathcal{O}(h_{\nu-1}^{-1})~.
\end{align*}
For $\nu=2$, the leading term in the expression above is zero, because by definition, see Eq.~\eqref{eq:def_g_mu_j}, $\textbf{E}\{ g^{(\nu-1)}_k g^{(\nu-1)}_l\} = \textbf{E}\{ g^{(\nu-1)}_k\} \textbf{E}\{g^{(\nu-1)}_l\}$. For $\nu>2$ the sum is of order $\mathcal{O}(h_{\nu-1}^2)$ such that the leading contribution with respect to $h_{\nu-1}$ of expression on the right hand side, is at most of order $\mathcal{O}(1)$. However, since the pre-activations of the proceeding layer converge to independent random variables as $n\to\infty$, i.e. $\textbf{E}\{ g^{(\nu-1)}_k g^{(\nu-1)}_l\} \to \textbf{E}\{ g^{(\nu-1)}_k\} \textbf{E}\{g^{(\nu-1)}_l\}$ for $l\neq k$, the expression above converges to zero for all $\delta\in(0,1]$. 

For $\nu=2,\dots, k$ we rescale $\delta$ by a factor of $2/(|M|(|M|-1))$ and apply the union bound over $u\neq v$ where $u,v\in M$ to find that $\text{Cov}\left(f^{(\nu)}_v,f^{(\nu)}_u\right)$ converges to zero for all $u\neq v$, where $u,v\in M$, with at least probability $1-\delta$. Since $\delta\in(0,1]$ is arbitrary the claim follows. 
\end{proof}

\subsection{Lyapunov Condition for Random Neural Networks}
We prove that a neural network, given by Eq.~\eqref{eq:def_f_mu_j} and \eqref{eq:def_g_mu_j} satisfies the Lyapunov condition,
\begin{align}
    \lim_{n\rightarrow\infty} \frac{1}{s_n^{2+\delta}} \sum_{k=1}^{r_n} \expectation{\left|X_{nk}-\mu_{nk}\right|^{2+\delta}} =0~,\label{eq:Lyapounov_condition}
\end{align}
where $\mu_{nk}=\expectationWithoutVar{X_{nk}}$ and $s_n^{2} = \sum_j\expectationWithoutVar{(X_{nj}-\mu_{nj})^2)}$. The Lyapunov condition is a necessary condition of Lyapunovs  central limit theorem~\cite{Billingsley:prob_and_measure}, which we state for completeness. 
\begin{maththm}\label{lyapunos_clt}
Suppose that for each $n$ the sequence $X_{n1},\dots, X_{nr_n}$
is independent and satisfies $\mu_{nk}=\expectation{X_{nk}}$, $\sigma^2_{nk} = \expectation{\left(X_{nk}-\mu_{nk}\right)^2}$, $s_n^2 = \sum_{k=1}^{r_n} \sigma^2_{nk}$
and let $S_n = X_{n1} + \dots + X_{nr_n}$. If the Lyapunov-condition~\eqref{eq:Lyapounov_condition}
holds for some positive $\delta$, then $(S_n - \sum_{k=1}^{r_n}\mu_{nk}) /s_n \Rightarrow N$.
\end{maththm}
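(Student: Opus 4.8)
This is the classical Lyapunov central limit theorem, so I would follow the standard characteristic-function route, reusing the exponential estimate~\eqref{upper_bound_complex_exp} already introduced above. Without loss of generality I center the variables, setting $\mu_{nk}=0$, and pass to the triangular array $Y_{nk}=X_{nk}/s_n$, so that $\sum_{k=1}^{r_n}\sigma_{nk}^2/s_n^2=1$ and the claim reduces to $\sum_{k=1}^{r_n} Y_{nk}\Rightarrow N$. Since the $X_{nk}$ are independent within each row, the characteristic function of the normalized sum factorizes as $\prod_{k=1}^{r_n}\varphi_{nk}(t)$ with $\varphi_{nk}(t)=\expectationWithoutVar{\eu^{\iu t Y_{nk}}}$, and the goal becomes to prove $\prod_{k=1}^{r_n}\varphi_{nk}(t)\to \eu^{-t^2/2}$ for every fixed $t\in\IR$.

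\textbf{Reduction to Lindeberg.} First I would show that the Lyapunov condition~\eqref{eq:Lyapounov_condition} implies the Lindeberg condition, i.e.\ that for every $\epsilon>0$
\begin{align*}
\frac{1}{s_n^2}\sum_{k=1}^{r_n}\expectationWithoutVar{(X_{nk}-\mu_{nk})^2\,\mathds{1}_{\{|X_{nk}-\mu_{nk}|>\epsilon s_n\}}}\longrightarrow 0\enspace.
\end{align*}
On the event $\{|X_{nk}-\mu_{nk}|>\epsilon s_n\}$ one has $(X_{nk}-\mu_{nk})^2\le (\epsilon s_n)^{-\delta}|X_{nk}-\mu_{nk}|^{2+\delta}$, so the left-hand side is at most $\epsilon^{-\delta}s_n^{-(2+\delta)}\sum_k\expectationWithoutVar{|X_{nk}-\mu_{nk}|^{2+\delta}}$, which vanishes by~\eqref{eq:Lyapounov_condition}. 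From Lindeberg I would also extract the asymptotic negligibility $\max_k \sigma_{nk}^2/s_n^2\to 0$, using $\sigma_{nk}^2/s_n^2\le \epsilon^2+\sum_j\expectationWithoutVar{Y_{nj}^2\,\mathds{1}_{\{|Y_{nj}|>\epsilon\}}}$ and letting $\epsilon\downarrow 0$ after taking $\limsup_n$.

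\textbf{Expansion and products.} Applying~\eqref{upper_bound_complex_exp} with $x=Y_{nk}$, taking expectations, and using $\expectationWithoutVar{Y_{nk}}=0$, $\expectationWithoutVar{Y_{nk}^2}=\sigma_{nk}^2/s_n^2$ gives $|\varphi_{nk}(t)-(1-t^2\sigma_{nk}^2/(2s_n^2))|\le \expectationWithoutVar{\min\{|tY_{nk}|^2,|tY_{nk}|^3\}}$. Summing over $k$ and splitting each expectation according to whether $|Y_{nk}|\le\epsilon$ or $|Y_{nk}|>\epsilon$ — bounding $|tY_{nk}|^3\le|t|^3\epsilon\,Y_{nk}^2$ in the first regime and $|tY_{nk}|^2=t^2Y_{nk}^2$ in the second — yields $\sum_k|\cdots|\le |t|^3\epsilon+t^2\sum_k\expectationWithoutVar{Y_{nk}^2\,\mathds{1}_{\{|Y_{nk}|>\epsilon\}}}$, which tends to $0$ by the previous step since $\epsilon$ is arbitrary; this is the same estimate appearing in~\eqref{expectation_Talyor_complex_exponent}--\eqref{appearence_Lindeberg_condition}. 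Then, with the elementary inequality $|\prod_k a_k-\prod_k b_k|\le\sum_k|a_k-b_k|$ valid whenever all $|a_k|,|b_k|\le 1$ (here $|\varphi_{nk}(t)|\le 1$, and $0\le t^2\sigma_{nk}^2/(2s_n^2)\le 1$ for large $n$ by negligibility), I conclude $|\prod_k\varphi_{nk}(t)-\prod_k(1-t^2\sigma_{nk}^2/(2s_n^2))|\to 0$. Writing $c_{nk}=t^2\sigma_{nk}^2/(2s_n^2)$, one has $\sum_k c_{nk}=t^2/2$ and $\max_k c_{nk}\to 0$, so $\sum_k\log(1-c_{nk})=-\sum_k c_{nk}+O(\max_k c_{nk}\,\sum_k c_{nk})\to -t^2/2$, giving $\prod_k\varphi_{nk}(t)\to \eu^{-t^2/2}$ for every $t$. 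Since this is the characteristic function of $N$ and is continuous at $t=0$, Lévy's continuity theorem~\cite{Billingsley:prob_and_measure} upgrades pointwise convergence of characteristic functions to convergence in distribution, proving $(S_n-\sum_k\mu_{nk})/s_n\Rightarrow N$.

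\textbf{Main obstacle.} The conceptual content lies entirely in converting the Lyapunov moment bound into Lindeberg-type tail control and then summing the per-summand remainders uniformly in $n$. The delicate point is that the bound~\eqref{upper_bound_complex_exp} only becomes useful after the truncation at level $\epsilon$ is introduced, so that the $|Y_{nk}|^3$ contribution is absorbed by the total variance $1$ and the $|Y_{nk}|^2$ contribution by the vanishing Lindeberg tail; making these two regimes cooperate \emph{uniformly} rather than term-by-term is where care is required. The remaining steps — the product inequality, the negligibility $\max_k\sigma_{nk}^2/s_n^2\to 0$, and the continuity theorem — are routine.
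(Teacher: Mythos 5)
Your proof is correct and follows exactly the classical route that the paper itself relies on: the paper states this theorem without proof, citing \cite{Billingsley:prob_and_measure}, and your argument (Lyapunov $\Rightarrow$ Lindeberg via the truncation bound, second-order expansion using the estimate~\eqref{upper_bound_complex_exp}, the product inequality for characteristic functions, and L\'evy continuity) is precisely the standard Billingsley proof, whose ingredients also reappear in the paper's own proof of Lemma~\ref{lemma:asymptotic_independence_gamma}, cf.~\eqref{expectation_Talyor_complex_exponent}--\eqref{appearence_Lindeberg_condition}. No gaps.
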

The main argument of the proof relies on the fact, that the activations of the neural network in the numerator and the denominator of condition~\eqref{eq:Lyapounov_condition}, can be bounded independent from the width of the neural network. 

Due to the assumptions on the activation functions of being Lipschitz continuous with Lipschitz constant $L$, we find that
\begin{align}
\left|\phi(u)\right| &\leq \left|\phi(v)\right| + K|u-v|  ~, \label{eq:upper_bound_phi}\\
\left|\phi(u)\right| &\geq \left|\phi(v)\right| - K|u-v| \label{eq:lower_bound_phi}
\end{align}
holds for all $u,v \in \IR$. These bounds will allow us to bound the activation functions. First, we need a simple lemma bounding the activation functions of a neural network with weights drawn form a normal distribution. To formulate the next lemma we make the following assumption. According to the idea of dropout, we set a part of the activation in each layer to zero. We define the set $U^{(\nu)}$ to be the index set of activations in layer $\nu$ not set to zero. The collection of index sets $U^{(\nu)}$, for $\nu=1,\dots,k$, is called an \emph{activation pattern}.
\begin{mathlem}\label{lemma:bound_random_pre_activation}
Let $\alpha>0$, $x\in\IR^N$ be fixed such that $||x||_2^2\leq N \alpha$ and the bias $b$ as well as the weights  $w_i$ of a network pre-activation be drawn from $\mathcal{N}(0,1)$. Then for each finite integer $k$, independent of $N$ and all $U\subseteq U \in \mathcal{P}(\left\{1,...,N\right\})$ there exists a constant $C$ independent of $N$ such that
\begin{align}
    \expectationW{\left|b + \frac{1}{\sqrt{N}}\sum_{j\in U}w_{j}x_j\right|^k} \leq C~. \label{eq:pre_activation_bound}
\end{align}
\end{mathlem}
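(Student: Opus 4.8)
The plan is to recognize that the random quantity inside the absolute value is itself Gaussian---being a linear combination of independent Gaussian weights and bias---and to reduce its $k$-th absolute moment to a power of its standard deviation times the (constant) $k$-th absolute moment of a standard normal. The only real work is then to bound that standard deviation uniformly in $N$, which is exactly what the scaling $1/\sqrt{N}$ together with the growth constraint $\|x\|_2^2 \le N\alpha$ delivers.

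Concretely, I would first set $S := b + \frac{1}{\sqrt{N}}\sum_{j\in U} w_j x_j$. Since $b$ and the $w_j$ are independent $\mathcal{N}(0,1)$ variables and $x$ is fixed, $S$ is a centered Gaussian under $\mathbf{E}_W$ whose variance is
\begin{align*}
\mathrm{Var}_W(S) = 1 + \frac{1}{N}\sum_{j\in U} x_j^2 \le 1 + \frac{1}{N}\sum_{j=1}^N x_j^2 = 1 + \frac{\|x\|_2^2}{N} \le 1 + \alpha~,
\end{align*}
where the first inequality drops only nonnegative terms (using $U \subseteq \{1,\dots,N\}$) and the last invokes the hypothesis $\|x\|_2^2 \le N\alpha$. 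The crucial point is that this bound depends neither on $N$ nor on the choice of the index set $U$.

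It then remains to write $\sigma^2 := \mathrm{Var}_W(S)$ and use $S \stackrel{d}{=} \sigma Z$ with $Z\sim\mathcal{N}(0,1)$, so that $\expectationW{|S|^k} = \sigma^k\,\mathbf{E}\{|Z|^k\}$. The $k$-th absolute moment $m_k := \mathbf{E}\{|Z|^k\} = 2^{k/2}\Gamma((k+1)/2)/\sqrt{\pi}$ of a standard normal is finite and depends only on $k$. Combining with the variance bound yields $\expectationW{|S|^k} \le (1+\alpha)^{k/2}\, m_k =: C$, a constant independent of $N$ and $U$, which is precisely the claim. There is essentially no hard step here; the single point that requires attention is ensuring the variance estimate is uniform in $N$, and this is guaranteed exactly by the $1/\sqrt{N}$ normalization working against the admissible inputs $\|x\|_2^2 \le N\alpha$.
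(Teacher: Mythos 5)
Your proposal is correct, and the resulting bound $C=(1+\alpha)^{k/2}\,m_k$ is indeed independent of $N$ and of $U$, as required. It also rests on the same two pillars as the paper's proof: the rotational/stability properties of the Gaussian weight distribution and the uniform variance control coming from $\|x\|_2^2\le N\alpha$ against the $1/\sqrt{N}$ scaling. The execution, however, is genuinely different and tidier. The paper keeps the bias $b$ and the projection $\langle w_\perp,x_\perp\rangle$ as two \emph{separate} Gaussian variables: it rotates $w_\perp$ so that $x_\perp$ lies along the first coordinate axis (reducing the sum to $\|x_\perp\|_2\,w_1/\sqrt{N}$), then binomially expands the even moment $k=2m$ into products of Gaussian moments $c_{2j}c_{2(m-j)}$, and finally disposes of odd $k$ via the separate estimate $\expectationWithoutVar{|X|^{2m+1}}\le\sqrt{\expectationWithoutVar{|X|^{4m+2}}}$. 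You instead collapse everything into the single centered Gaussian $S$ with $\mathrm{Var}_W(S)=1+\frac{1}{N}\sum_{j\in U}x_j^2\le 1+\alpha$ and invoke the closed form $\expectationW{|S|^k}=\sigma^k\,\mathbf{E}\{|Z|^k\}$, which treats even and odd $k$ uniformly and eliminates both the binomial expansion and the even/odd case split. What your route buys is brevity and an explicit constant; what the paper's route buys is essentially nothing extra here---its rotation step is just a more roundabout way of asserting the same fact that a linear combination of \iid Gaussians is Gaussian with computable variance.
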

\begin{proof}
Let $k=2m$, $x_{\perp} = (x_i)_{i\in U}$ and $w_{\perp} = (w_i)_{i\in U}$ such that $\sum_{j\in U}w_{j}x_j = \left<w_{\perp},x_{\perp}\right>$ and $||x_{\perp}||_2^2 \leq ||x||_2^2\leq \alpha N$. Without loss of generality we assume that $1\in U$. We rotate $w_{\perp}$ such that $x_{\perp}$ is given by a vector with $||x_{\perp}||_2$ in the first element and zero elsewhere. This simplifies the expression~\eqref{eq:pre_activation_bound} of the lemma to  
\begin{align*}
 & \sum_{j=1}^{m}{2m\choose 2j}\left(\frac{||x_{\perp}||_2}{\sqrt{N}}\right)^{2(m-j)} c_{2(m-j)} c_{2j}\\
 \leq& \sum_{j=0}^{m}{2m\choose 2j}\alpha^{(m-j)} c_{2(m-j)}c_{2j} \leq C~,
\end{align*}
where $c_{n}$ is the $n$th moment of a Gaussian random variable with zero mean and unit variance. The last inequality in the estimate above is due to the fact that the sum as well as the moments and coefficients are independent of $N$,  proving the claim for even $k$. Let $X$ be a random variable with $\expectationWithoutVar{|X|^{2k}}<\infty$, then it follows that
\begin{align}
\begin{split}
\expectationWithoutVar{|X|^{2m+1}} &= \sqrt{\expectationWithoutVar{|X|^{4m+2}}- \text{Var}\left(|X|^{2m+1}\right)} \\&\leq \sqrt{\expectationWithoutVar{|X|^{4m+2}}}~.
\end{split}\label{eq:bounding_even_moments}
\end{align}
Since all finite moments of a Gaussian random variable exist, the claim follows for odd $k$ from Eq.~\eqref{eq:bounding_even_moments}.
\end{proof}
We now apply lemma~\ref{lemma:bound_random_pre_activation} recursively to the neural network to make the following observation. Since it would only slightly change the absolute values of the constants bounding the expressions in the following and not change their dependence on the layer width, we will keep all activations in the networks, i.e. set $U^{(\nu)} = [h_{\nu}(n)]$, without explicitly mentioning it. 
\begin{mathlem}\label{lemma:finitely_bounded_activations}
Let $\alpha>0$, $x\in\IR^N$ be fixed such that $||x||_2^2\leq N \alpha$, $g_i^{(\nu)}(x)$ be as defined in Eq.~\eqref{eq:def_g_mu_j} and the bias $b_i^{(\nu)}$ as well as the weights $w_{ij}^{(\nu)}$, of a network pre-activation, be drawn from $\mathcal{N}(0,1)$. Then there exists a constant $C<\infty$ such that
$$ \expectationW{\left|g_i^{(\nu)}(x)\right|^k} \leq C \quad \forall \nu = 1,\dots, L~,$$
for each finite, positive integer $k$, where the expectation is over the weights and the bias. 
\end{mathlem}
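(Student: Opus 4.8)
The plan is to prove the bound by induction on the layer index $\nu$, turning the width-independent moment estimate of Lemma~\ref{lemma:bound_random_pre_activation} into a statement about the genuinely random inputs that feed each layer. Two reductions come first. Since the dropout masks satisfy $z_i^{(\nu)}\in\{0,1\}$, we have $|g_i^{(\nu)}(x)|\le|\phi(f_i^{(\nu)}(x))|$ pointwise, so dropout is irrelevant for an upper bound and it suffices to control the moments of $\phi(f_i^{(\nu)}(x))$; and since the Lipschitz property in the form $|\phi(u)|\le|\phi(0)|+K|u|$ (take $v=0$ in~\eqref{eq:upper_bound_phi}) makes $|\phi(f_i^{(\nu)})|^k$ a degree-$k$ polynomial in $|f_i^{(\nu)}|$, it suffices to bound $\expectationW{|f_i^{(\nu)}(x)|^k}$ for every finite $k$. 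Restricting to even exponents $k=2m$ loses nothing, the odd case following from~\eqref{eq:bounding_even_moments}. For the base case $\nu=1$ the input $g_j^{(0)}(x)=x_j$ is deterministic with $\|x\|_2^2\le\alpha d$, so Lemma~\ref{lemma:bound_random_pre_activation} applies verbatim and bounds $\expectationW{|f_i^{(1)}(x)|^{k}}$ by a constant depending only on $k$ and $\alpha$.

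For the inductive step I would condition on all weights, biases and dropout masks of the layers strictly below $\nu$, which freezes the vector $g^{(\nu-1)}(x)$. Conditionally, $f_i^{(\nu)}(x)=b_i^{(\nu)}+h_{\nu-1}^{-1/2}\sum_j w_{ij}^{(\nu)}g_j^{(\nu-1)}(x)$ is a linear combination of the independent standard Gaussians $w_{ij}^{(\nu)},b_i^{(\nu)}$, hence a centered Gaussian whose variance is the \emph{random} quantity $\sigma^2=1+h_{\nu-1}^{-1}\|g^{(\nu-1)}(x)\|_2^2$. Its conditional $2m$-th moment is $c_{2m}\sigma^{2m}$, with $c_{2m}$ the $2m$-th moment of a standard normal, so taking the outer expectation and expanding gives
\begin{align*}
\expectationW{|f_i^{(\nu)}(x)|^{2m}}=c_{2m}\sum_{l=0}^m\binom{m}{l}\frac{1}{h_{\nu-1}^l}\,\expectationW{\Big(\sum_{j=1}^{h_{\nu-1}}\big(g_j^{(\nu-1)}(x)\big)^2\Big)^l}.
\end{align*}

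The crux, and the main obstacle, is that the conditional variance carries this random squared norm, so each term must be shown to be $O(1)$ uniformly in the width. Expanding the $l$-th power produces $h_{\nu-1}^l$ monomials $\prod_{r=1}^l(g_{j_r}^{(\nu-1)})^2$; using $|g|\le|\phi(f)|$ and the generalized H\"older inequality, each has expectation at most $\max_j\expectationW{|g_j^{(\nu-1)}|^{2l}}$, a width-independent constant $C_{2l}$ by the inductive hypothesis. Hence the inner expectation is $O(h_{\nu-1}^l)$ and the explicit prefactor $h_{\nu-1}^{-l}$ cancels it exactly, leaving each summand bounded by $\binom{m}{l}\,C_{2l}$ and therefore $\expectationW{|f_i^{(\nu)}(x)|^{2m}}$ bounded independently of the width. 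Feeding this back through the two reductions of the first paragraph yields the claimed bound on $\expectationW{|g_i^{(\nu)}(x)|^k}$ and closes the induction. It is precisely the $1/\sqrt{h_{\nu-1}}$ normalization together with the availability of \emph{all} moments of the previous layer, not merely its variance, that makes this cancellation work uniformly in $n$.
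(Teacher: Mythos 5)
Your proof is correct and follows essentially the same route as the paper's: induction over the layer index, the Lipschitz bound $|\phi(u)|\le|\phi(0)|+K|u|$ to reduce moments of $g_i^{(\nu)}$ to moments of $f_i^{(\nu)}$, and Lemma~\ref{lemma:bound_random_pre_activation} for the base case. The one place you diverge is the inductive step, which the paper disposes of in a single sentence (``as $f^{(\nu+1)}_i(x)$ is an affine-linear combination of $g_j^{(\nu)}$ \dots\ all absolute moments are bounded'') while you actually prove that claim: conditioning on the layers below, using the exact conditional Gaussian moment $c_{2m}\bigl(1+h_{\nu-1}^{-1}\|g^{(\nu-1)}(x)\|_2^2\bigr)^m$, and showing via H\"older and the inductive hypothesis that the $h_{\nu-1}^{-l}$ prefactor exactly cancels the $h_{\nu-1}^{l}$ monomials --- this width-uniformity argument is precisely what the paper leaves implicit, so your write-up is, if anything, more complete than the original.
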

\begin{proof}
We prove the statement by induction over $\nu$. For $\nu=1$, this follows immediately from lemma~\ref{lemma:bound_random_pre_activation} and Eq.~\eqref{eq:upper_bound_phi} with $v=0$. Thus, the induction hypothesis is that this is true for $\nu$. For $\nu+1$ we find
\begin{align*}
\expectationW{\left|g_i^{(\nu+1)}(x)\right|^k} \leq  \expectationW{\left(K\left|f_i(x)^{(\nu+1)}\right| + \left|\phi(0)\right|\right)^k}  \\=\sum_{j=0}^{k}{k\choose j} K^j \left|\phi(0)\right|^{k-j} \expectationW{\left|f_i(x)^{(\nu+1)}\right|^j}~.
\end{align*}
As $f^{(\nu +1)}_i(x)$ is an affine-linear combination of $g_j^{(\nu)}$ which is bounded according to the induction hypothesis, all absolute moments of $f^{(\nu +1)}_i(x)$ are bounded. This concludes the induction step.
\end{proof}
We facilitate this bound on the absolute moments of the individual moments, to find a high probability bound on polynomials of the absolute value of the activation functions.
\begin{mathlem}\label{lemma:high_prob_bound_activations}
Let $p(x)$ be a polynomial of degree $k$ in $x$. With probability of at least $1-\delta$ it follows, under the assumption of Lemma~\ref{lemma:finitely_bounded_activations}, that there exists a constant $C$, independent of $h_{\nu}(n)$ such that
$$ \left| p\left(\left|g_i^{(\nu)}(x)\right|\right) - \expectationW{p\left(\left|g_i^{(\nu)}(x)\right|\right)}\right| \leq \sqrt{\frac{h_{\nu}(n)}{\delta}}C~,$$
where the expectations are over the weights and bias, holds for all $i=1,\dots,h_{\nu}(n)$ and any finite integer $k$. 
\end{mathlem}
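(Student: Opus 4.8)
The plan is to reduce the claim to a per-neuron concentration estimate and then cover all $h_\nu(n)$ neurons at once by a union bound. The only probabilistic tool required is Chebyshev's inequality in the very form already used in the preceding proofs, $\mathbf{P}_W(|f(W)-\expectationW{f(W)}| \leq \sqrt{\text{Var}_W(f(W))/\delta}) \geq 1-\delta$, applied to the random variable $f(W) = p(|g_i^{(\nu)}(x)|)$ for each fixed neuron index $i$. Hence the crux is to bound the variance $\text{Var}_W(p(|g_i^{(\nu)}(x)|))$ by a constant that is independent of the width $h_\nu(n)$ and uniform in $i$.

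First I would establish exactly this variance bound. Since $p$ has degree $k$, the square $p(|g_i^{(\nu)}(x)|)^2$ is a polynomial in $|g_i^{(\nu)}(x)|$ of degree $2k$, i.e.\ a finite linear combination of the absolute moments $|g_i^{(\nu)}(x)|^j$ for $j=0,\dots,2k$. Bounding the variance by the second moment and applying the triangle inequality term by term gives $\text{Var}_W(p(|g_i^{(\nu)}(x)|)) \leq \expectationW{p(|g_i^{(\nu)}(x)|)^2} \leq \sum_{j=0}^{2k} |a_j|\, \expectationW{|g_i^{(\nu)}(x)|^j}$, where the $a_j$ are the coefficients of $p^2$. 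By Lemma~\ref{lemma:finitely_bounded_activations} each moment $\expectationW{|g_i^{(\nu)}(x)|^j}$ is bounded by a constant depending only on $j$ (hence only on $k$ and on the coefficients of $p$) and not on the width; since that lemma's constant does not depend on the neuron index, the resulting variance bound $C'$ is the same for every $i$.

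With the uniform bound $\text{Var}_W(p(|g_i^{(\nu)}(x)|)) \leq C'$ in hand, I would apply Chebyshev's inequality separately to each neuron $i=1,\dots,h_\nu(n)$, but with the per-neuron failure probability rescaled to $\delta/h_\nu(n)$. This yields, for each fixed $i$, that $|p(|g_i^{(\nu)}(x)|) - \expectationW{p(|g_i^{(\nu)}(x)|)}| \leq \sqrt{C' h_\nu(n)/\delta}$ with probability at least $1-\delta/h_\nu(n)$. Taking the union bound over the $h_\nu(n)$ neurons (finitely many for fixed $n$) makes the estimate hold simultaneously for all $i$ with probability at least $1-\delta$, and setting $C=\sqrt{C'}$ reproduces the advertised bound $\sqrt{h_\nu(n)/\delta}\,C$ with $C$ independent of the width.

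The argument is essentially routine once Lemma~\ref{lemma:finitely_bounded_activations} is available, so the main obstacle is careful bookkeeping rather than a hard estimate: I must verify that the moment bound is genuinely \emph{uniform} in $i$, so that a single constant $C'$ controls every neuron, and confirm that performing a union bound over a number of events that itself grows with the width costs only the stated factor $\sqrt{h_\nu(n)}$. This $\sqrt{h_\nu(n)}$ growth is unavoidable with the crude maximum-over-neurons strategy, but it is harmless for the subsequent estimates, where the normalization by $1/\sqrt{h_{\nu-1}(n)}$ in the definition of the pre-activations supplies compensating inverse powers of the width.
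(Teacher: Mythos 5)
Your proposal is correct and follows essentially the same route as the paper's own proof: bound $\text{Var}_W\bigl(p(|g_i^{(\nu)}(x)|)\bigr)$ by a width-independent constant via the second moment and Lemma~\ref{lemma:finitely_bounded_activations}, apply Chebyshev's inequality with per-neuron failure probability $\delta/h_\nu(n)$, and finish with a union bound over $i$. Your write-up is in fact slightly more careful than the paper's, which states the variance bound as $\text{Var}(X)\leq \sqrt{\mathbf{E}\{X^2\}}$ (an apparent typo for $\text{Var}(X)\leq \mathbf{E}\{X^2\}$) and leaves the uniformity in $i$ implicit.
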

\begin{proof}
We have that $\text{Var}(X)\leq \sqrt{\mathbf{E}{X^{2}}}$ for any random variable $X$ with finite second moment. From this inequality, an expansion of $p$ in $\mathbf{E}|g_i^{(\nu)}(x)|$ and lemma~\ref{lemma:finitely_bounded_activations} we find that there exists a constant $C'$ such that 
$$ \text{Var}_W\left(p\left(\left|g_i^{(\nu)}(x)\right|\right)\right) \leq C'~,$$
where the variance is with respect to the weights. This estimate in combination with Chebyshev's inequality, yields that with probability of at least $1-\delta/h_{\nu}(n)$
$$ \left| p\left(\left|g_i^{(\nu)}(x)\right|\right) - \expectationW{p\left(\left|g_i^{(\nu)}(x)\right|\right)}\right| \leq \sqrt{\frac{h_{\nu}(n)}{\delta}}C~.$$
Applying the union bound over $i$ to the expression above, we find that it holds for all $i=1,\dots,h_{\nu}(n)$ with probability of at least $1-\delta$.
\end{proof}
The following generalization of the lemmas above, will simplify the proof of the final lemma in this section.
\begin{mathcor}\label{corollary:generalizing_to_all_acti_patterns}
Let $g^{(\nu)}_{i,l}(x)$ be the activation function with the activation pattern $U_l^{(\nu')}$ for $\nu'=1,\dots,\nu-1$ and $l\in\mathcal{L}$, where $\mathcal{L}$ is a finite index set. Then lemma~\ref{lemma:finitely_bounded_activations} and lemma~\ref{lemma:high_prob_bound_activations} hold also for polynomials of finite degree in $\{|g^{(\nu)}_{i,l}(x)~|~l\in\mathcal{L}\}$.
\end{mathcor}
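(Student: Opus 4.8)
The plan is to reduce both assertions to the single-pattern versions already established in Lemma~\ref{lemma:finitely_bounded_activations} and Lemma~\ref{lemma:high_prob_bound_activations}, exploiting the fact that the underlying pre-activation estimate Lemma~\ref{lemma:bound_random_pre_activation} is uniform over the choice of index set $U$. First I would observe that the induction proving Lemma~\ref{lemma:finitely_bounded_activations} never uses that the activation pattern is the full set $[h_\nu(n)]$: at each layer it invokes Lemma~\ref{lemma:bound_random_pre_activation}, whose bound \eqref{eq:pre_activation_bound} holds for every $U \in \mathcal{P}(\{1,\dots,N\})$ with a constant independent of both $N$ and $U$. Consequently, for each fixed pattern label $l\in\mathcal{L}$ the single-family moment bound $\expectationW{|g^{(\nu)}_{i,l}(x)|^k}\le C$ holds, and since $\mathcal{L}$ is finite the constant $C$ can be chosen uniformly over $l$ by taking the maximum.

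The only genuinely new ingredient is controlling a polynomial in the finite family $\{|g^{(\nu)}_{i,l}(x)| : l\in\mathcal{L}\}$, whose members share the same weights and biases and are therefore \emph{dependent} under $\expectationW{\cdot}$. This dependence is precisely what would block a naive factorization of the expectation of a product, and it is the point I expect to require care. The resolution is that dependence becomes irrelevant once one passes to Hölder's inequality: writing the polynomial as a finite sum of monomials $\prod_{l} |g^{(\nu)}_{i,l}(x)|^{a_l}$, each monomial has expectation bounded, via the generalized Hölder inequality for conjugate exponents $\sum_l 1/p_l = 1$, by $\prod_l \big(\expectationW{|g^{(\nu)}_{i,l}(x)|^{a_l p_l}}\big)^{1/p_l}$, and every factor is finite by the uniform single-family moment bound of the previous paragraph. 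Summing over the finitely many monomials of a polynomial of degree $k$ then yields the generalization of Lemma~\ref{lemma:finitely_bounded_activations} to polynomials in the family.

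For the generalization of Lemma~\ref{lemma:high_prob_bound_activations} I would follow its proof verbatim, substituting the multi-pattern moment bound just obtained. The key step there controls the variance via $\text{Var}_W\big(p(\cdots)\big)\le \expectationW{p(\cdots)^2}$; since $p(\cdots)^2$ is again a polynomial of finite degree in the same family, its expectation is bounded by a constant $C'$ independent of $h_\nu(n)$ by the argument above. Chebyshev's inequality applied with $\delta\to\delta/h_\nu(n)$ then gives the bound for a single $i$ with probability at least $1-\delta/h_\nu(n)$, and a union bound over $i=1,\dots,h_\nu(n)$ promotes it to all $i$ simultaneously with probability at least $1-\delta$, reproducing the $\sqrt{h_\nu(n)/\delta}$ scaling. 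The finiteness of $\mathcal{L}$ enters only through the constants and never through the width, so the asymptotic content of both lemmas is preserved.
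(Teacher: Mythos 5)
Your proposal is correct and takes essentially the same approach as the paper: the paper's own proof is a one-line reduction observing that Lemma~\ref{lemma:bound_random_pre_activation} holds uniformly over all activation patterns $U$, so the arguments of Lemma~\ref{lemma:finitely_bounded_activations} and Lemma~\ref{lemma:high_prob_bound_activations} carry over unchanged. Your explicit handling of mixed monomials in the dependent family $\{|g^{(\nu)}_{i,l}(x)|\,:\,l\in\mathcal{L}\}$ via the generalized H\"older inequality supplies a detail the paper leaves implicit, but it does not alter the underlying argument.
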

\begin{proof}
The claim follows immediately form the fact that lemma~\ref{lemma:bound_random_pre_activation} holds for all activation patterns and the application of it in the proofs of lemma~\ref{lemma:finitely_bounded_activations} and lemma~\ref{lemma:high_prob_bound_activations}.
\end{proof}
With lemma~\ref{lemma:high_prob_bound_activations} and corollary~\ref{corollary:generalizing_to_all_acti_patterns} at hand, we can easily prove that the condition of the Lyapunov CLT is satisfied. In contrast to the lemmas above, the expectation in the following lemma will be over the dropout random variables. 
\begin{mathlem}\label{proof_satisfaction_of_lypunov_condition}
Let $\alpha>0$, $x\in\IR^d$ be fixed such that $||x||_2^2\leq d\alpha$, $\gamma^{(\nu)}_{j,n}(t)$ and $s_n^{(\nu)}(x)$ be as defined in Eq.~\eqref{eq:def_psi_gamma} and~\eqref{def_psi_diag_variance} respectively. We assume that $s_n^{(\nu)}$ does not converge to zero. Then we have that
$$ \lim_{n\to \infty} \left(\frac{1}{s_n^{(\nu)}(x)}\right)^4 \sum^{h_{\nu-1}}_{j=1}\expectation{\left|\gamma^{(\nu)}_{j,n}(t)\right|^4} = 0~,$$
almost surely over the weights.
\end{mathlem}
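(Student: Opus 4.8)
The plan is to verify the Lyapunov condition Eq.~\eqref{eq:Lyapounov_condition} for the choice $\delta=2$ (so that $2+\delta=4$) by showing that the numerator $\sum_{j=1}^{h_{\nu-1}}\expectation{|\gamma^{(\nu)}_{j,n}(t)|^4}$ vanishes strictly faster than the denominator $(s_n^{(\nu)})^4$, which by hypothesis is bounded away from zero. First I would exploit the factorized structure of $\gamma^{(\nu)}_{j,n}$ in Eq.~\eqref{eq:def_psi}: since the bracket $z^{(\nu)}_j g^{(\nu-1)}_j(x)-q\expectation{g^{(\nu-1)}_j(x)}$ does not depend on $u$, we may write $\gamma^{(\nu)}_{j,n}=h_{\nu-1}(n)^{-1/2}A_j X_j$ with $A_j:=\sum_{u=0}^M t_u \weights^{(\nu)}_{uj}$ and $X_j:=z^{(\nu)}_j g^{(\nu-1)}_j(x)-q\expectation{g^{(\nu-1)}_j(x)}$, whence
\begin{align*}
\expectation{|\gamma^{(\nu)}_{j,n}|^4}=\frac{1}{h_{\nu-1}(n)^2}\,A_j^4\,\expectation{X_j^4}~.
\end{align*}
This isolates a factor $h_{\nu-1}(n)^{-2}$ per summand, which is the source of the required decay once summed against only $h_{\nu-1}(n)$ terms.

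Next I would bound the two surviving factors. From $|X_j|\le|g^{(\nu-1)}_j(x)|+q|\expectation{g^{(\nu-1)}_j(x)}|$ the dropout moment $\expectation{X_j^4}$ is dominated by a fixed-degree polynomial in the moments $\expectation{|g^{(\nu-1)}_j(x)|^m}$, $m\le4$. By Lemma~\ref{lemma:finitely_bounded_activations}, whose proof via Lemma~\ref{lemma:bound_random_pre_activation} holds uniformly over all activation patterns (Corollary~\ref{corollary:generalizing_to_all_acti_patterns}), we have $\expectationW{|g^{(\nu-1)}_j(x)|^m}\le C_m$ for \emph{every} fixed dropout realization; interchanging the dropout and weight expectations by Fubini then yields the width-independent bound $\expectationW{\expectation{X_j^4}}\le C$. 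For the prefactor, since the layer-$\nu$ weights are \iid $\mathcal{N}(0,1)$ and independent both of the dropout variables and of the earlier-layer weights entering $g^{(\nu-1)}_j(x)$, one has $\expectationW{A_j^4}=3\,||t||_2^4$, and the expectation of the product factorizes across these independent weight blocks. Hence $\expectationW{\expectation{|\gamma^{(\nu)}_{j,n}|^4}}\le C'/h_{\nu-1}(n)^2$.

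Summing over the $h_{\nu-1}(n)$ neurons gives $\expectationW{\sum_{j}\expectation{|\gamma^{(\nu)}_{j,n}|^4}}\le C'/h_{\nu-1}(n)\to0$. Because this numerator is non-negative, Markov's inequality (used exactly as Chebyshev's inequality is applied in Lemma~\ref{lemma:Limiting_Covariance_Structure}, with an arbitrarily small $\delta$) shows that with probability at least $1-\delta$ over the weights the numerator is at most $C'/(\delta\,h_{\nu-1}(n))$, which tends to zero. Dividing by $(s_n^{(\nu)})^4$, bounded away from zero by the hypothesis that $s_n^{(\nu)}$ does not converge to zero, the full Lyapunov expression converges to zero; since $\delta\in(0,1]$ is arbitrary, this holds almost surely over the weights.

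The main obstacle is the uniform, width-independent control of $\expectation{X_j^4}$. The subtlety is that $g^{(\nu-1)}_j(x)$ depends on the dropout realizations of \emph{all} earlier layers, so its fourth dropout moment must be bounded simultaneously over the growing collection of activation patterns and then merged with the weight expectation; this is precisely what the chain Lemma~\ref{lemma:bound_random_pre_activation}~$\to$~Lemma~\ref{lemma:finitely_bounded_activations}~$\to$~Corollary~\ref{corollary:generalizing_to_all_acti_patterns} delivers, and invoking it correctly—together with the independence of the layer-$\nu$ weights that lets $\expectationW{A_j^4}$ split off as a constant—is the crux of the argument. The remaining passage from the $L^1$ bound to the almost-sure statement rests entirely on the assumed non-degeneracy of $s_n^{(\nu)}$.
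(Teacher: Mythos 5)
Your proof is correct at the paper's own level of rigor, and it rests on the same two pillars as the paper's argument---the factorization $\gamma^{(\nu)}_{j,n}=h_{\nu-1}(n)^{-1/2}A_jX_j$ with $A_j=\sum_u t_u w^{(\nu)}_{uj}$ (the paper uses this implicitly, since its degree-four weight polynomial $\sum_{u,v,l,o}w_{uj}w_{vj}w_{lj}w_{oj}t_ut_vt_lt_o$ is exactly $A_j^4$), and the pattern-uniform moment bounds delivered by Lemma~\ref{lemma:bound_random_pre_activation} through Corollary~\ref{corollary:generalizing_to_all_acti_patterns}---but the mechanics are genuinely different. The paper never integrates the dropout moments over the weights: it invokes the \emph{high-probability} bound of Lemma~\ref{lemma:high_prob_bound_activations} to control $\expectation{X_j^4}$ pointwise in the weights by $\sqrt{h_{\nu-1}(n)/\delta}\,C$ on a probability-$(1-\delta)$ event, reduces to the pure weight polynomials $h_{\nu-1}^{-1}\sum_j A_j^2$ and $h_{\nu-1}^{-2}\sum_j A_j^4$, and then organizes the conclusion through the auxiliary sequences $a_n=\sqrt{\delta/h_{\nu-1}(n)}\,(s_n^{(\nu)})^2$ and $b_n=(\delta/h_{\nu-1}(n))\sum_j\expectation{|\gamma^{(\nu)}_{j,n}(t)|^4}$ and the ratio $b_n/a_n^2$. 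You instead integrate out the weights entirely (Fubini plus the $L^1$ moment bounds of Lemma~\ref{lemma:finitely_bounded_activations}, made uniform over activation patterns by Corollary~\ref{corollary:generalizing_to_all_acti_patterns}), exploit independence of the layer-$\nu$ weights to split off $\expectationW{A_j^4}=3||t||_2^4$, obtain the clean rate $\expectationW{\sum_j\expectation{|\gamma^{(\nu)}_{j,n}(t)|^4}}\leq C/h_{\nu-1}(n)$, and apply Markov exactly once before dividing by $(s_n^{(\nu)})^4$. Your route is leaner: it avoids the $\sqrt{h/\delta}$-inflated intermediate bounds and the $a_n,b_n$ bookkeeping, and it sidesteps the paper's awkward assertion that $a_n$ does not converge to zero (taken literally this is false, since $a_n$ carries the prefactor $\sqrt{\delta/h_{\nu-1}(n)}$; only the cancellation in the ratio $b_n/a_n^2$ matters, which your direct division by the denominator makes explicit); what the paper's conditioning-on-weights route buys is that every intermediate statement remains pointwise in the weights, which is closer in form to the ``almost surely over the weights'' conclusion. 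Two caveats, both shared with the paper rather than introduced by you: a Markov or Chebyshev bound applied at each fixed $n$ yields only convergence in probability of the numerator, and neither proof supplies the Borel--Cantelli/summability step (e.g.\ $\sum_n 1/h_{\nu-1}(n)<\infty$) needed to upgrade to the stated almost-sure convergence; likewise, both proofs read ``$s_n^{(\nu)}$ does not converge to zero'' as ``$(s_n^{(\nu)})^4$ is eventually bounded away from zero.''
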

\begin{proof}
We consider the following sequences
$$ a_n = \sqrt{\frac{\delta}{h_{\nu-1}(n)}}\left(s_n^{(\nu)}\right)^2$$
and
$$ b_n = \frac{\delta}{h_{\nu-1}(n)} \sum^{h_{\nu-1}}_{j=1}\expectation{\left|\gamma^{(\nu)}_{j,n}(t)\right|^4}~.$$
To see that $a_n$ converges we make the following estimate
$$ a_n \leq C \frac{1}{h_{\nu-1}(n)}\sum^{h_{\nu-1}}_{j=1}\sum_{u,v\in M}w_{uj}^{(\nu)}w_{vj}^{(\nu)}t_ut_v~,$$
where we made use of \eqref{def_psi_diag_variance_exp},  lemma~\ref{lemma:high_prob_bound_activations} and corollary~\ref{corollary:generalizing_to_all_acti_patterns} as follows. The expectation value in Eq.~\eqref{def_psi_diag_variance_exp} is over the space of dropout configurations  and can be interpreted as average over activation patterns. Since the corollary~\ref{corollary:generalizing_to_all_acti_patterns} holds for all activation patterns, we arrive at the expression above. The remainder converges  to $C||t||_2^2$ for $n\to\infty$ with probability 1, for all values of $\delta\in[0,1]$.

The convergence of $b_n$ to zero is due to the following observations. By definition we have that  $b_n\geq 0$. Moreover, the summands of the sum over $j$  are polynomials of degree 4. These polynomials consists of terms of the form
\begin{align*}
(-1)^{4-a}\left|\expectation{g_j^{(\nu-1)}(x)}\right|^{4-a}\expectation{\left|g_j^{(\nu-1)}(x)\right|^a}~, \label{eq:estimate_convergence_bn}
\end{align*}
where $a=0,\dots,4~.$ The first two factors in the expression above can be upper bounded by $\mathbf{E}\{|g_j^{(\nu-1)}(x)|^{4-a}\}$. Due to corollary~\ref{corollary:generalizing_to_all_acti_patterns} and lemma~\ref{lemma:high_prob_bound_activations} there exists a constant $C$, such that with probability of at least $1-\delta$ the resulting polynomial can be bounded by $\sqrt{h_{\nu-1}(n)/\delta}~C$ for all $j=1,\dots,h_{\nu-1}(n)$. This leads us to the upper bound
\begin{align*}
    b_n &\leq 
    \sqrt{\frac{\delta}{h_{\nu-1}(n)}}\frac{C}{h_{\nu-1}^2(n)} \\&\times\sum^{h_{\nu}}_{j=1}\sum_{u,v,l,o\in M}w_{uj}^{(\nu)}w_{vj}^{(\nu)}w_{lj}^{(\nu)}w_{oj}^{(\nu)}t_ut_vt_lt_o~.
\end{align*} 
An inspection of an expectation of the expression above, with respect to the weights, in combination with the Markov inequality shows  that it converges to zero for $n\to\infty$ with probability 1 for all values of $\delta\in[0,1]$ and $t\in \IR^{|M|}$.

By assumption of the lemma, $s_n{(\nu)}$ does not converge to zero, neither does $a_n$. Thus since $a_n$ and $b_n$ converge, we can find, applying the union bound, that the limit of $b_n/a_n^2$ as quotient of their limits and thus find that it converges to zero for $n\to \infty$. However, $b_n/a_n^2$ is nothing else then the sequence of the claim. Since we showed that it converges with probability of at least $1-\delta$ over the weights, for all $\delta\in[0,1]$, we proved the claim.
\end{proof}

\newpage
\section{Empirical Observations}\label{app:empirical}

\textbf{Further implementation details} All biases of the networks are set to zero. Training is done on shuffled mini-batches of size $100$ using the standard train-test data split without any further preprocessing. All experiments were run in pytorch \cite{pytorch} using a \verb|Intel(R) Xeon(R) Gold 6126| CPU and a \verb|NVidia GeForce GTX 1080ti| GPU.

\textbf{Correlations in random networks} We analyze the weight correlations and pre-activation correlations of the untrained $\mathcal{H}_{\rm narrow}$ and $\mathcal{H}_{\rm wide}$. The (row-wise) Pearson correlations of the weight matrices (Fig.~\ref{fig:weight_correlations}, top row) are centered around zero with estimation errors that are determined by the width of the respective network. The column-wise weight correlations look the same. To calculate pre-activation correlations (see Eq. \ref{eq:def_f_mu_j}), we run $10{,}000$ forward passes with dropout for a fixed test image. Fig.~\ref{fig:pre_act_correlations} (top row) shows that these correlations are largely similar to the weight correlations---as can be expected. 

\textbf{Correlations in trained networks} For $\mathcal{H}_{\rm narrow}$, we find similar weight correlations for all hidden layers with correlations coefficients that range from $-1$ to $1$ (Fig.~\ref{fig:weight_correlations}, bottom left). While the distributions for $\mathcal{H}_{\rm wide}$ are even more homogeneous across layers, they span only from approximately $-0.5$ to $0.5$ (Fig.~\ref{fig:weight_correlations}, bottom right). More importantly, these distributions resemble the central part of the $\mathcal{H}_{\rm narrow}$ distributions and do by no means collapse to zero, i.e., although network width varies by one order of magnitude, the global dependence structures are similar in both networks. Next, we study the dependencies between pre-activations that are (mainly) induced by the weight dependencies. Fig.~\ref{fig:pre_act_correlations} shows roughly the same dependence pattern as Fig.~\ref{fig:weight_correlations}, however, for $\mathcal{H}_{\rm wide}$ the pre-activation correlation distribution gets broader with layer depth (Fig.~\ref{fig:pre_act_correlations}, bottom right). Intuitively, we can make sense of this observation: the network iteratively withdraws input information to finally only keep what is useful for classification, and less information distributed over a fixed number of neurons means stronger correlations. Moreover, not only pre-activation correlations increase with layer depth but also their variances and thus covariances.

\textbf{Further observations for trained networks} We find that sigmoid activations suppress non-normal distributions, which is in contrast to tanh, ReLU, and linear activation functions. Further experiments with a custom non-linearity that is a proxy to sigmoid suggest that the combination of being constrained and mapping onto only one half-axis might be a critical condition for a Gaussian inducing non-linearity.  

\begin{figure}[hbt]
    \centering
    \includegraphics[width=0.5\textwidth]{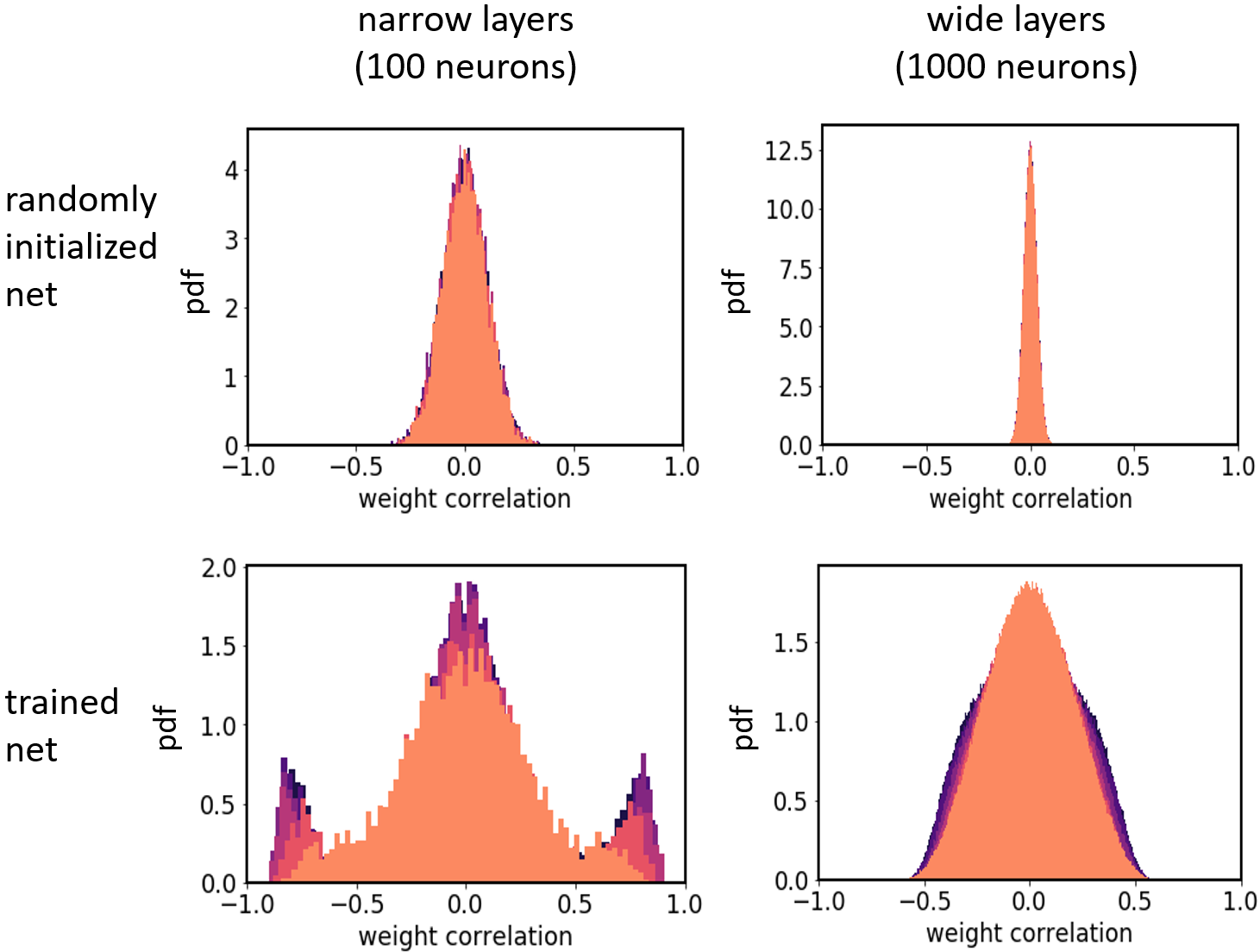}
    \caption{Weight correlations of randomly initialized (top row) and trained (bottom row) networks that are either narrow ($h = 100$, left column) or wide ($h = 1000$, right column). The different hidden layers are color-coded: from first (melon) to last (purple) hidden layer. The Pearson correlation coefficient is used.}
    \label{fig:weight_correlations}
\end{figure}

\begin{figure}[hbt]
    \centering
    \includegraphics[width=0.5\textwidth]{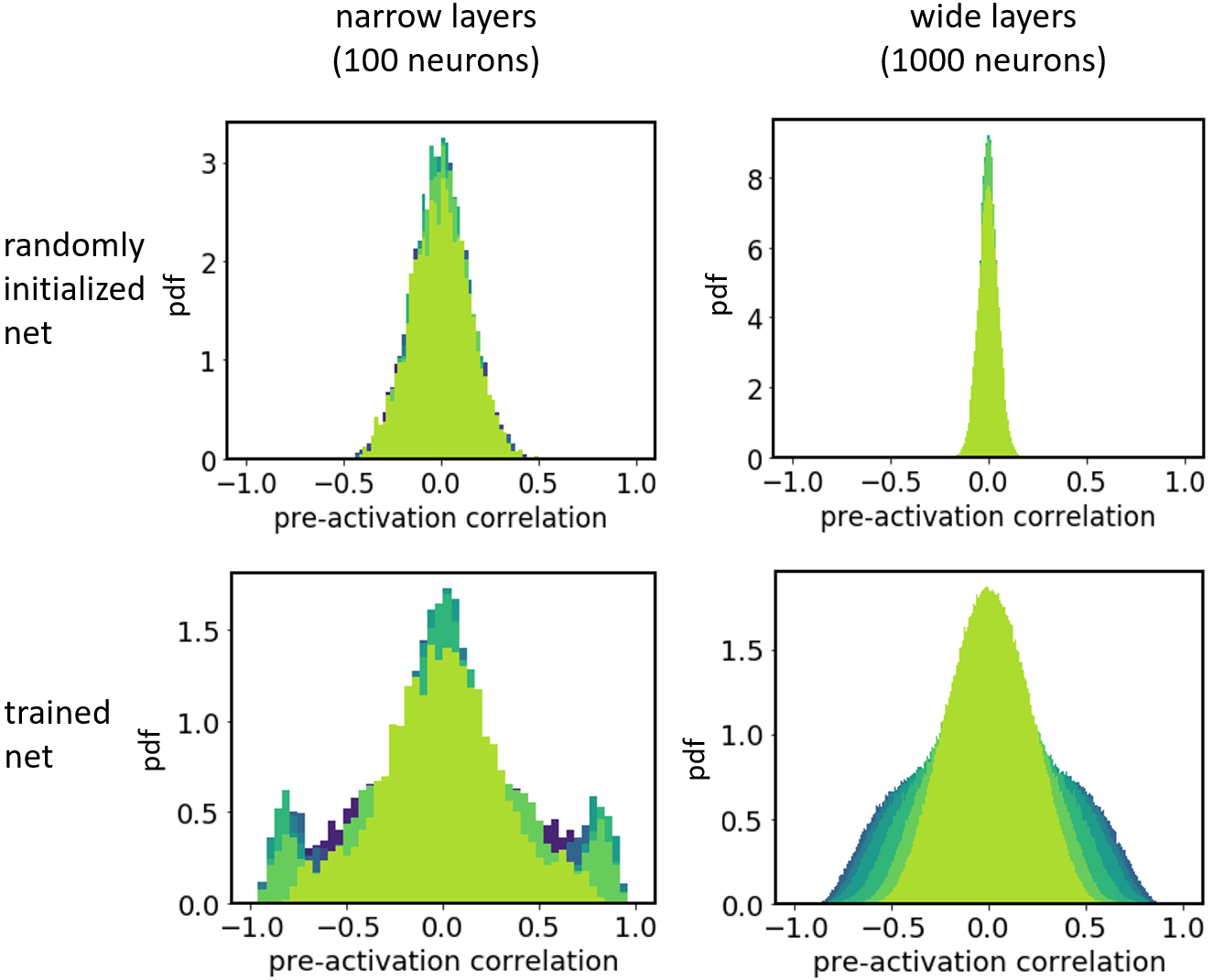}
    \caption{Pre-activation correlations of randomly initialized (top row) and trained (bottom row) networks that are either narrow ($h = 100$, left column) or wide ($h = 1000$, right column). The different hidden layers are color-coded: from first (green) to last (dark blue) hidden layer. The Pearson correlation coefficient is used.}
    \label{fig:pre_act_correlations}
\end{figure}

\clearpage
\newpage
\section{On Correlated Systems}
\label{app:toy_model}

Here, we provide additional information accompanying the calculations in Sec.~\ref{sec:strongly_correlated}.
For instance, the PDF in \eq{eq:zN1PDF} can be calculated using a filter integral:
\begin{align}
    \operatorname{PDF}_{\tilde{f}}(\xi) & \propto
    \int\!\upd x \upd y\, \delta(\xi -x y) \eu^{-x^2/2 -y^2/2}
    \\
    & \propto \int\!\upd x \upd y \upd k\, \eu^{-\iu k (\xi-xy)}\eu^{-x^2/2 -y^2/2}
    \\
    & = \int\!\upd k\, \eu^{-\iu k \xi} \int \!\upd x \upd y\,
    \exp{\left( -\frac{1}{2} \bm{b}^T\!\bm{A}\,\bm{b} \right)}
\end{align}
with the shorthand notations
\begin{equation}
    \bm{b} = \left(
    \begin{array}{c}
         x \\
         y 
    \end{array}\right)
    \quad\text{and}\quad
    \bm{A} = \left(
    \begin{array}{cc}
        1 & -\iu k \\
        -\iu k & 1 
    \end{array}\right)\enspace .
    \label{eq:apx:toyShorthand1}
\end{equation}
Evaluating the inner integrals thus leads to
\begin{align}
    \operatorname{PDF}_{\tilde{f}}(\xi) & \propto
    \int_{-\infty}^{+\infty}\!\upd k\, \frac{\eu^{-\iu k \xi}}{\sqrt{1+k^2}}
    = 2 K_0(|\xi|)\enspace .
\end{align}
The differing prefactor in \eq{eq:zN1PDF} follows from the normalization condition for a PDF. For convenience, we omitted any prefactors here.

A direct extension of this calculation to non-zero mean is challenging.
Instead we provide a heuristic explanation where we include this aspect via
\begin{equation}
    X=\mu_X + \sigma_X \epsilon_X
    \quad\text{with }
    \epsilon_X\sim \mathcal{N}(0,1)
    \quad\text{($Y$ analogous)}\enspace ,
\end{equation}
leading to
\begin{equation}
    X Y= \underbrace{\mu_X \mu_Y}_\text{const.} + \underbrace{\mu_X \sigma_Y\, \epsilon_Y
        + \mu_Y \sigma_X\,\epsilon_X}_\text{``Gaussian''} + \underbrace{\sigma_X\sigma_Y\,\epsilon_X\epsilon_Y}_\text{``tail''}
        \enspace .
    \label{eq:toyWithMean}
\end{equation}
Neglecting the first term as constant, the two middle summands follow a Gaussian behavior while the last one contains a product giving rise to exponential tails.
While this heuristic breakdown ignores the correlations of the twice occurring $\epsilon$'s it qualitatively captures the behavior of $XY$.
Indeed, we find a stronger emphasis towards exponential tails for $\sigma>\mu$ and for Gaussian behavior the other way around.
For illustration,  Fig.~\ref{fig:toyHistogram1} shows the empirical distribution of $Z$ for $\mu_X=\mu_Y=0$ (left) and $\mu_X=\mu_Y=10$ (right).
As a guide we added a numerically obtained PDF as well as an approximation following the logic of our explanation for \eq{eq:toyWithMean}.
In this second case we used the addition of two independent random variables $\tilde Z = \tilde X + \tilde Y$, where $\tilde X \sim N(\mu_X \mu_Y, \sqrt{(\sigma_X \mu_Y)^2 + (\sigma_Y \mu_X)^2})$ is a Gaussian which models the first three terms in \eq{eq:toyWithMean}.
For $\tilde Y$ we use an exponential distribution,
\begin{equation}
    \operatorname{PDF}_{\tilde Y}(\xi)=\frac{1}{2\sigma_X \sigma_Y}\exp{\left(-\frac{|\xi|}{\sigma_X \sigma_Y}\right)}
    \enspace ,
\end{equation}
designed to capture the tail properties of the Bessel function.
Here, we neglected the additional $|\xi|^{-1/2}$ dependence of the asymptotic, Eq.~\eqref{eq:toyGenAsympt}, to keep the resulting integrals of a Gaussian type.
This allows us to give an explicit expression for $\tilde Z$,
\begin{equation}
\begin{split}
    \operatorname{PDF}_{\tilde Z}(\xi) = \frac{\eu^{\frac{\sigma_1^2}{2 \sigma_2^2}}}{4\sigma_2}
    \Bigg( &
    \eu^{\frac{\mu-\xi}{\sigma_2}} \operatorname{Erfc}\left(\frac{\sigma_1^2+(\mu-\xi)\sigma_2}{\sqrt{2} \sigma_1 \sigma_2} \right)
    \\ + & 
    \eu^{\frac{\xi-\mu}{\sigma_2}} \operatorname{Erfc}\left(\frac{\sigma_1^2+(\xi-\mu)\sigma_2}{\sqrt{2} \sigma_1 \sigma_2} \right)
    \Bigg)
    \enspace .
\end{split}
\end{equation}
Therein we used the short hand notations $\mu=\mu_x \mu_Y$, $\sigma_1=\sqrt{(\sigma_X \mu_Y)^2 + (\sigma_Y \mu_X)^2})$ and $\sigma_2= \sigma_X \sigma_Y $. Also,
\begin{equation}
    \operatorname{Erfc}(\zeta) = 1 - \operatorname{Erf}(\zeta) 
    = 1 - \frac{2}{\sqrt{\pi}}\int_0^\zeta \!\upd t\, \eu^{-t^2}
\end{equation}
denotes the complementary error function.
As can be seen in Fig.~\ref{fig:toyHistogram1} this approximation roughly captures the exponential tails.
Furthermore, we find that for $\mu>\sigma$ the tail behavior is suppressed and the distribution becomes asymmetric, a property we similarly observe in the real data, see Fig.~\ref{fig:pre_act_distributions}.
\begin{figure}[bt]
    \includegraphics[width=0.47\columnwidth]{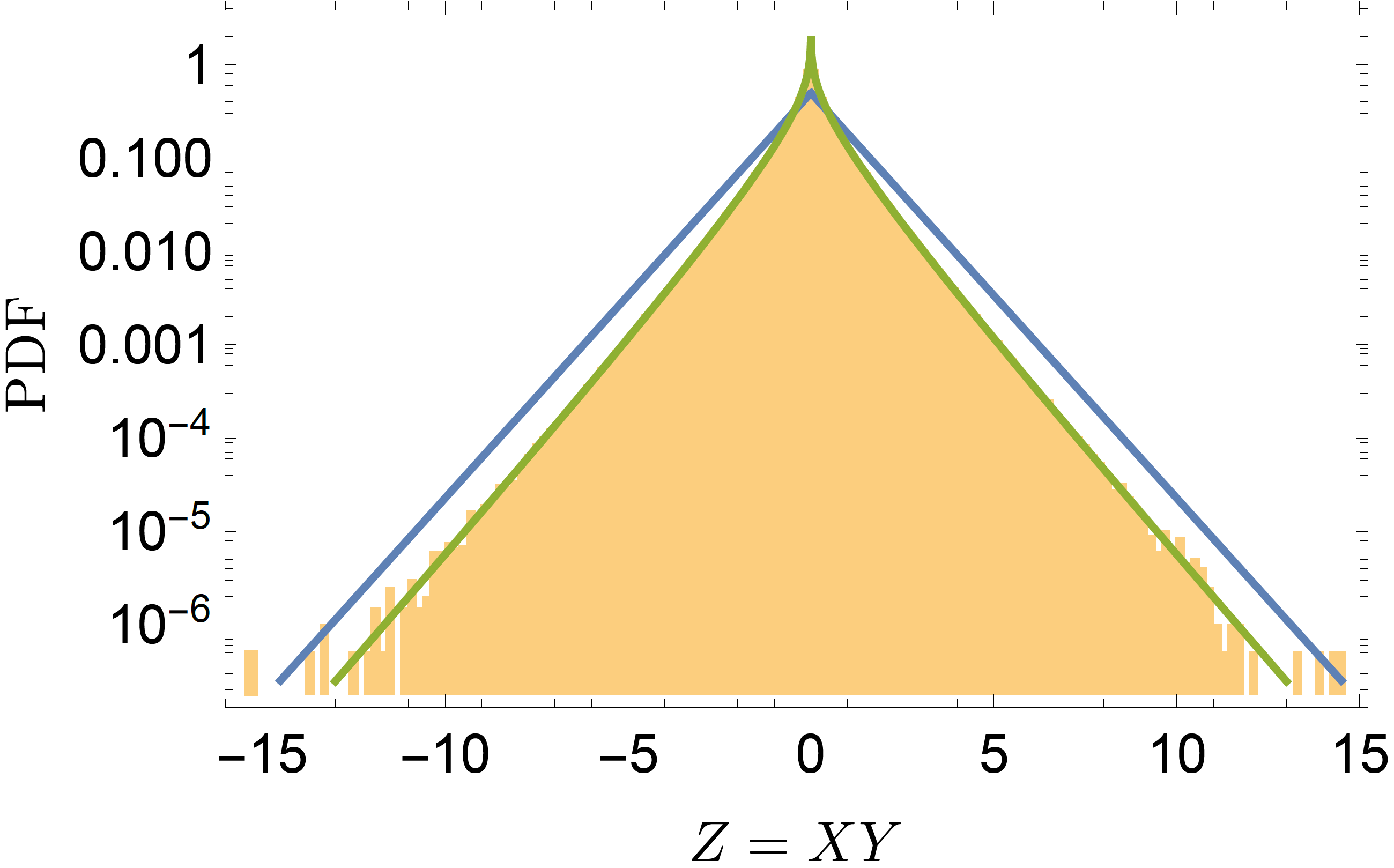}
    \hfill
    \includegraphics[width=0.47\columnwidth]{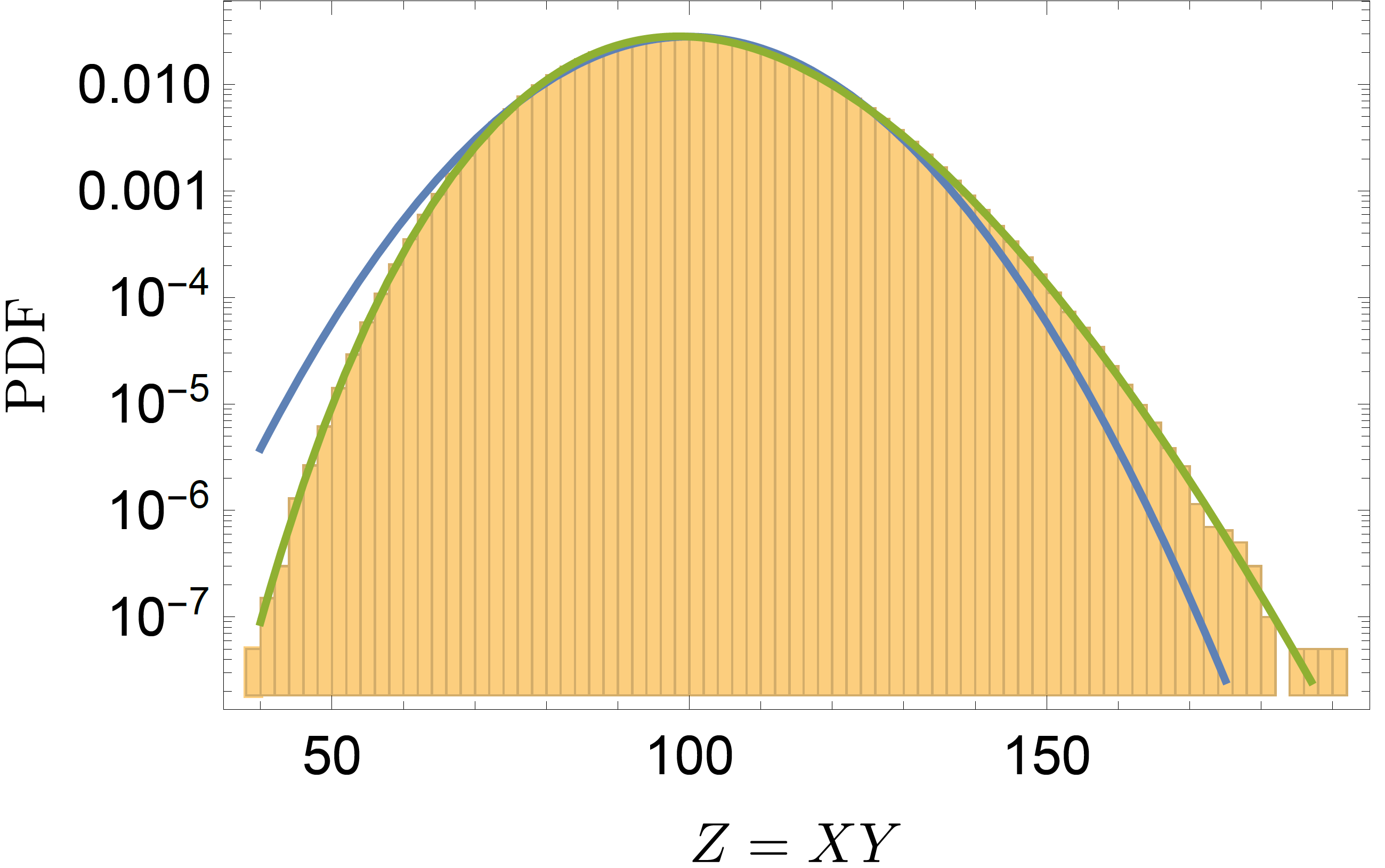}
    \caption{Logarithmic visualization of the PDF for the product of two Gaussian random variables $X,Y$. Shown in blue and green are an approximation to the PDF (see text) and an exact numerical result respectively. Left side shows $\mu_X=\mu_Y=0$ and right $\mu_X=\mu_Y=10$, in both panels $\sigma_X=\sigma_Y=1$.}
    \label{fig:toyHistogram1}
\end{figure}

The extension of the $Z=X Y$ toy model to a sum of random variables $ Z=\sum_{i=1}^h X_i Y_i$ can be treated, at least theoretically, in the same way as the original problem:
\begin{equation}
\begin{split}
    \operatorname{PDF}_{Z}(\xi) & \propto
    \int\!\upd \bm x \upd \bm y\, \delta(\xi -{\bm x}^T \bm y) \eu^{-{\bm x}^T \bm{\Sigma}_X^{-1} \bm x/2 -{\bm y}^T \bm{\Sigma}_Y^{-1} \bm y/2}
    \\
    & = \int\!\upd k\, \eu^{-\iu k \xi} \int \!\upd \bm x \upd \bm y\,
    \exp{\left( -\frac{1}{2} \bm{b}^T \bm{A} \bm{b} \right)}
    \enspace ,
\end{split}
\end{equation}
where in this case $\bm b \in \mathbb{R}^{2h}$ is instead a stacked vector of both $\bm x$ and $\bm y$, compare \eq{eq:apx:toyShorthand1}. Furthermore,
\begin{equation}
    A = \left(
    \begin{array}{cc}
        \Sigma_X^{-1} & -\iu k \mathds{1} \\
        -\iu k \mathds{1} & \Sigma_Y^{-1} 
    \end{array}\right)\,.
\end{equation}
Evaluating the inner integrals gives rise to
\begin{equation}
    \operatorname{PDF}_{Z}(\xi) \propto \int_{-\infty}^{+\infty}\!\upd k \,
    \frac{\eu^{-\iu k \xi}}{\sqrt{\operatorname{det}{\left(\Sigma_{\tilde z}^{-1} \Sigma_{\tilde g}^{-1} + k^2 \mathds{1}
    \right)}}}
    \enspace ,
\end{equation}
where we used the identity
\begin{equation}
    \operatorname{det}
    \left(
     \begin{array}{cc}
        E & B \\
        C & D 
    \end{array}\right)
    = \operatorname{det}\left(ED-BC\right)
\end{equation}
valid for arbitrary square matrices $B,C,D,E$ as long as $CD-DC=0$ holds.
Assuming that the matrix $\Sigma_{\tilde z}^{-1} \Sigma_{\tilde g}^{-1}$ is diagonalizable with (positive) eigenvalues $\sigma_i^{2}$ leads to
\begin{equation}
    \operatorname{PDF}_{Z}(\xi) \propto \int_{-\infty}^{+\infty}\!\upd k \,
    \frac{\eu^{-\iu k \xi}}{\sqrt{\prod_{i=1}^h \left(\sigma_i^{2} + k^2
    \right)}}\enspace .
    \label{eq:toyGeneral}
\end{equation}
Depending on the eigenvalue spectrum the resulting function can exhibit quite different tail behaviors.
To briefly illustrate this, let us make two remarks:
If we assume doubly degenerate eigenvalues we can ``ignore'' the square root and \eq{eq:toyGeneral} instead has poles of \textit{first} order at the positions $k=\pm \iu \sigma_i$.
Based on the residue theorem we then find
\begin{equation}
    \operatorname{PDF}_{Z}(\xi) \propto \sum_{i=1}^h \frac{\eu^{-\sigma_i |\xi|}}{2\sigma_i \prod_{j \neq i}^h (\sigma_j^2+\sigma_i^2)}\,.
    \label{eq:toyLaplace}
\end{equation}
In a loose sense this might be seen as a discrete variant of a Laplace transform and therefore has a similar variety of outcomes.
Those could be largely restricted if one assumes the spectrum of the $\sigma_i$ to be bounded.
In the body of the paper we omitted this discussion and instead chose an easier example closer to the intended application.

The correlation expressed in Eq.~\eqref{eq:corrToy} can  be seen as stemming from a multivariate Gaussian with zero mean and covariance matrix
\begin{equation}
    \bm{\Sigma}_{ij}=\left\{\begin{array}{ll}
        1 & \quad i=j \\
        c & \quad i\neq j 
    \end{array}\right.\enspace .
\end{equation}
While this matrix has a very simple structure with only two distinct eigenvalues of $1+(h-1)c$ and $1-c$, the latter is $h-1$ fold degenerate.
Therefore, our considerations from Eq.~\eqref{eq:toyLaplace} do not directly apply.
Instead, we use this model to build the weight matrices for the random network shown in Fig.~\ref{fig:net_with_correlated_init}.
To this end, we draw for each $\bm{W}_\nu$ two sets of such multivariate Gaussian distributed vectors $\{ \bm{a}_i,\bm{b}_i\in \mathbb{R}^h \}$.
These vectors form two matrices $\bm{A}=(a_1,\ldots, a_h)\in \mathbb{R}^{h\times h}$ ($\bm{B}$ analogous) with correlated rows such that each $\bm{W}$ is given by $\bm{W}_\nu=(\bm{A}_\nu+\bm{B}_\nu^T)/(2 \sqrt{q h})$, where $q=1-p$ denotes the keep rate.
This way we ensure independent correlations both among rows and columns of the matrices.

\end{document}